\DeclareMathOperator*{\argmax}{argmax}
\DeclareMathOperator*{\defeq}{\coloneqq}
\DeclareMathOperator*{\expected}{\mathbb{E}}
\theoremstyle{plain}
\newtheorem{theorem}{Theorem}[section]
\newtheorem{lemma}[theorem]{Lemma}
\theoremstyle{definition}
\newtheorem{definition}[theorem]{Definition}
\theoremstyle{remark}
\title{Generalization Analysis on \\Learning with a Concurrent Verifier}
\author{%
  Masaaki Nishino, ~~~ Kengo Nakamura, ~~~ Norihito Yasuda \\
  NTT Communication Science Laboratories, NTT Corporation\\
  \texttt{\{masaaki.nishino.uh, kengo.nakamura.dx, norihito.yasuda.hn\}@hco.ntt.co.jp} 
}
\begin{document}

\maketitle








\begin{abstract}

Machine learning technologies have been used in a wide range of practical systems.
In practical situations, it is natural to expect the input-output pairs of a machine learning model to satisfy some requirements.
However, it is difficult to obtain a model that satisfies requirements by just learning from examples.
A simple solution is to add a module that checks whether the input-output pairs meet the requirements and then modifies the model's outputs.
Such a module, which we call a {\em concurrent verifier} (CV), can give a certification, although how the generalizability of the machine learning model changes using a CV is unclear. 
This paper gives a generalization analysis of learning with a CV. We analyze how the learnability of a machine learning model changes with a CV and show a condition where we can obtain a guaranteed hypothesis using a verifier only in the inference time.
We also show that typical error bounds based on Rademacher complexity will be no larger than that of the original model when using a CV in multi-class classification and structured prediction settings. 
\end{abstract}

\section{Introduction}
\label{seq:introduction}

As machine learning technology matures, many systems have been developed that exploit machine learning models. 
When developing a system that uses a machine learning model, a model with merely small prediction error is not satisfactory due to real-field requirements. For example, an object recognition model that is
sensitive to slight noise would cause security issues~\cite{NEURIPS2018_be53d253-linear-network-verificaton,tjeng2018evaluating-mipverify}, or a model 
with unexpected output would increase a system's cost for dealing with it.
Thus, we want the input-output pairs of a machine learning model to satisfy some {\em requirements}. 
However, it is difficult to obtain a model that satisfies the requirements by just learning from examples. Moreover, 
since the learned models tend to be complex and
the input domain tends to be quite large,  it is unrealistic to certify that every input-output pair satisfies the requirements. 
In addition, even if we find an input-output pair that does not satisfy the requirements, modifying a model is difficult since we have to re-estimate it from the training examples.

This paper considers a way to obtain a machine learning model whose input-output pairs satisfy the required properties.  We address the following assumptions for a situation where a machine learning model is used. First, we can judge whether input-output pair $(x, h(x))$ satisfies the requirements, where $h: \mathcal{X} \to \mathcal{Y}$ is a machine learning model or a hypothesis. As we show below, important use cases fit
this setting. Second, a machine learning model already exists whose prediction error is small enough, although its input-output pairs are not guaranteed
to satisfy the requirements. This second assumption is also reasonable since modern machine learning models show
sufficient prediction accuracy in various tasks.
Under these assumptions, a practical choice for addressing this problem isn’t changing the machine learning model but
adding a module that checks the input-output pairs of machine learning model $h$. 
We call this module {\em a concurrent verifier} (CV). Fig.~\ref{fig:concept} shows the system configuration of a machine learning model with a CV.
The verifier checks whether the input-output pair $(x, h(x))$ satisfies the required properties. If it satisfies the requirements, it outputs $h(x)$. If not, then it rejects $h(x)$ and modifies or requests the learning model to modify its output. A machine learning model and verifier pair can be seen as another machine learning model whose
input-output pairs are guaranteed to satisfy the required conditions. 

Although a model with a verifier can guarantee that its input-output pairs satisfy requirements, its effect on prediction error is unclear.
This paper gives theoretical analyses of the generalization errors of a machine learning model with a CV. We focus on how the learnability of the original model, denoted as hypothesis class $\mathcal{H}$, can change by using the verifier. First, we consider a situation where we use a CV only in the inference phase. This setting corresponds to a case where the required properties are unknown when we are in the training phase. If the hypothesis class is PAC-learnable,  
we can obtain a guaranteed hypothesis using a verifier only in the inference time. 

Second, we consider a situation where we know the requirements when learning the model. This situation corresponds to viewing the learnability of
hypothesis set $\mathcal{H}_c$, which is obtained by modifying every hypothesis $h \in \mathcal{H}$ to satisfy the requirements. Hence we compare
the generalization error upper bounds of $\mathcal{H}_c$ with those of $\mathcal{H}$. On the multi-class classification setting, we show that
existing error bounds~\cite{NEURIPS2018_1141938b-multiclass,10.5555/2371238-mohri} based on the Rademacher complexity of $\mathcal{H}$  are also bounds  of modified hypothesis $\mathcal{H}_c$  for any
input-output requirements. Moreover, we give similar analyses for a structured prediction task, which 
is a
kind of multi-class classification where set of classes $\mathcal{Y}$ can be decomposed into substructures. It is worth analyzing
the task since many works address the constraints in structured prediction. 
Some works give error bounds for structured prediction tasks, which are tighter than 
simply applying the bound for multi-class classification tasks~\cite{NEURIPS2021_structure,NIPS2016_535ab766-structure,ijcai2021-0391-structure}.
Similar to the case of multi-class classification, we
show that existing Rademacher complexity-based bounds for the structured prediction of $\mathcal{H}$  are also the bounds for $\mathcal{H}_c$. 
\begin{figure}[t]
\centering
\includegraphics[width=0.6\hsize, keepaspectratio]{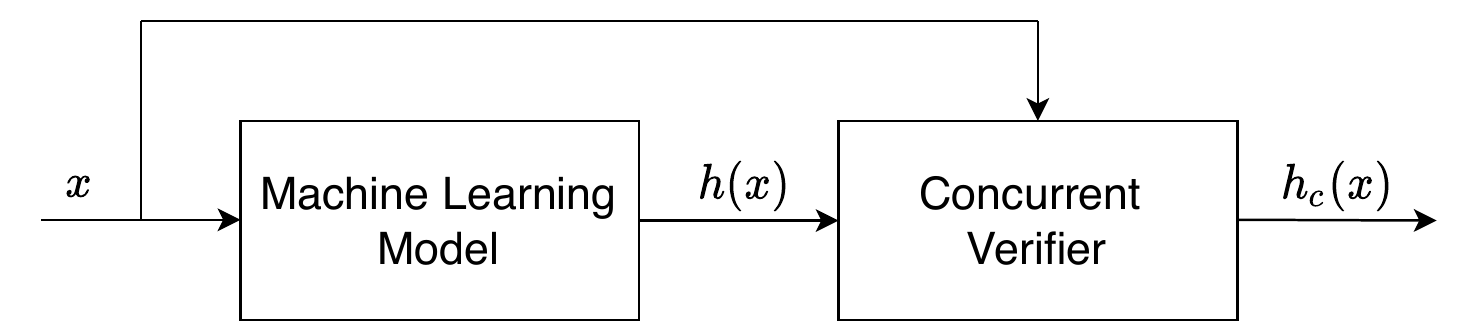}
\caption{ Overview of a machine learning model with a concurrent verifier that 
checks whether input-output pairs of a model satisfy requirements.}
\label{fig:concept}
\end{figure}

Our main contributions are as follows: a) We introduce a concurrent verifier, which is a model-agnostic way to guarantee that machine learning models satisfy the required properties. Although a similar mechanism was used in some existing models, our model gives a generalization analysis that does not depend on a specific model.
    b) We show that if hypothesis class $\mathcal{H}$ is PAC-learnable, then using a verifier at the inference time can give a hypothesis with a guarantee in its generalization error. Interestingly, if H is not PAC-learnable, we might fail to obtain a guaranteed hypothesis even if the requirements are consistent with distribution $\mathcal{D}$. 
    c) We show that if we use a CV in a learning phase of multi-class classification tasks, then the theoretical error bounds of $\mathcal{H}$ based on the Rademacher complexity will not increase with any input-output requirements. We also give similar results for  structured prediction tasks.

\subsection{Use Cases of a Concurrent Verifier}

The following are some typical use cases for CVs. 

\textbf{Error-sensitive applications:} A typical situation where we want to use a verifier is 
that some prediction errors might cause severe effects, which we want to avoid. For example, a recommender system might limit the 
set of candidate items depending on user attributes. Although such a rule might degrade the prediction accuracy, practically a safer model is preferable.

\textbf{Controlling outputs of structured prediction:}
Constraints are frequently used in structured prediction tasks for improving the performance or
the controllability of the outputs.
For example, some works \cite{roth2005integer,10.1007/s10994-012-5296-5-structured}  exploited the constraints on sequence labeling tasks for  reflecting background knowledge to 
improve the prediction results. 
More recently, some works \cite{hokamp-liu-2017-lexically,anderson-etal-2017-guided} exploited the constraints in language generation tasks, including image captioning and machine translation,
and restricted a model to output a sentence that includes given keywords. Since the constraints
used in this previous work can be written as a logical formula,
our CV model can represent them as requirements.

\textbf{Robustness against input perturbations:}
If a machine learning model changes its output because we modified its input from $x$ to $x^\prime$, which is very close to $x$, then the model is described as sensitive against a small change~\cite{42503intriguring-adversarial}. It might be a security risk if a model is sensitive since its behavior is unpredictable. Therefore, some methods evaluate and verify
the robustness of neural networks against small perturbations~\cite{tjeng2018evaluating-mipverify,NEURIPS2018_be53d253-linear-network-verificaton}. Existing verification methods check
a machine learning model's robustness around input $x$ by determining whether $x^\prime$ exists that is close to $x$ and whether model $f$ gives different outputs, i.e., $h(x) \neq h(x^\prime)$, for verification 
samples $x_1, \ldots, x_n$.
Although these verification methods can test a model, they do not directly show how to obtain
a robust model.

A CV can fix a model to achieve robustness around samples $x_1, \ldots, x_n$ by
 setting a rule of form: ``$h(x^\prime)$ must equal $h(x_i)$ if $x^\prime$ is close to $x_i$.'' Although this solution might not guarantee robustness where samples are scarce, adding enough non-labeled verification samples is often a reasonable choice. 

\section{Related Work}

Machine learning models that can exploit constraints have been investigated in many research fields, including statistical symbolic learning and structured prediction. For example, Markov logic networks~\cite{richardson2006markov}, Problogs~\cite{10.5555/1625275.1625673-problog}, and probabilistic circuit models~\cite{KR148005-psdd} integrate statistical models with symbolic logic formulations. Since these models can incorporate hard constraints represented by symbolic logic, they can guarantee input-output pairs. 
However, previous research focused on their practical performance and gave little theoretical analysis of their learnability when hard constraints are used. 
Moreover, previous works integrated the ability to exploit constraints into specific models. 
In contrast, our CV is model-agnostic and can be used in combination with a wide range of machine learning models.

Recently, the verification of machine learning models has been gathering more attention. Attempts have verfified whether a machine learning model has the desired properties \cite{NEURIPS2018_be53d253-linear-network-verificaton,tjeng2018evaluating-mipverify,10.1007/978-3-319-63387-9_5-relplex,NEURIPS2018_f2f44698-verify}. Exact verification methods use 
integer programming (MIP)~\cite{tjeng2018evaluating-mipverify}, constraint satisfaction (SAT)~\cite{Narodytska2020In-bnn-sat}, and a satisfiable module theory (SMT) solver~\cite{10.1007/978-3-319-63387-9_5-relplex} to assess the robustness of a neural network model against input noise. 
These approaches aim to obtain models that fulfill the required properties. However, verification methods
cannot help modify the models if they do not satisfy the requirements. If we want ML models to 
meet requirements, post-processing is needed as our concurrent verification model.

Other methods can give upper bounds on generalization error, including
VC-dimension~\cite{doi:10.1137/1116025vc} and its extensions~\cite{10.5555/2789272.2912074-daniely,10.1023/A:1022605311895-natarajan}, Rademacher complexity~\cite{10.5555/944919.944944-rademacher1,10.2307/2700001-rademacher2}, stability~\cite{10.5555/1756006.1953019-stability}, and PAC-Bayes~\cite{10.1145/279943.279989-pacbayes-origin,Alquier2021UserfriendlyIT-pacbayes}. We use Rademacher
complexity in the following analysis since it is among the most popular tools for giving theoretical upper bounds
on generalization error. Rademacher complexity also has some extensions, including local Rademacher complexity~\cite{NEURIPS2018_1141938b-multiclass}
and factor graph Rademacher complexity~\cite{NIPS2016_535ab766-structure}. We can provide theoretical guarantees on these extended measures.

\section{Preliminaries}

Our notation follows a previous work \cite{10.5555/2621980-understanding}.
We first introduce the notations used in the following sections.
Let $\mathcal{X}$ denote the domain of the inputs, let $\mathcal{Y}$ be the domain of the labels, and let $Z$ be the domain of the examples defined as $Z\defeq \mathcal{X} \times \mathcal{Y}$. Let $\mathcal{H}$ be a hypothesis class, and let $\ell : \mathcal{H} \times Z \to \mathbb{R}_{+}$ be a loss function. 
Training data  $S = (z_1, \ldots, z_m) \in Z^m$ is a finite sequence of size $m$ drawn i.i.d. from a fixed but unknown probability distribution $\mathcal{D}$ on $Z$.
Learning algorithm $A$ maps training data $S$ to hypothesis $h$. We use notation $A(S)$ to denote the hypothesis that learning algorithm $A$ returns upon receiving $S$. We represent set $\{1, \ldots, K\}$ as $[K]$.

Given distribution $\mathcal{D}$ on $Z$, 
we denote by $L_{\mathcal{D}}(h)$ {\em the generalization error} and by $L_S(h)$ {\em the empirical error} of $h$ over $S$, defined by
\begin{equation}
L_{\mathcal{D}}(h) \defeq \mathop{\mathbb{E}}_{z \sim \mathcal{D}}[\ell(h, z)] \,, ~~~~~ L_{S}(h) \defeq \frac{1}{m}\sum_{i=1}^{m}\ell(h, z_i) \,.
\end{equation}\\
\textbf{PAC learnability:} We introduce PAC learnability and agnostic PAC learnability as follows. 
\begin{definition}{(Agnostic PAC learnability)}
Hypothesis class $\mathcal{H}$ is {\em agnostic PAC-learnable} if there exists function $m_{\mathcal{H}} : (0, 1)^2 \to \mathbb{N}$ and learning algorithm $A$ with the following property: For every $\epsilon, \delta \in (0, 1)$ and distribution $\mathcal{D}$ over $Z$, if $S$  consists of  $m \geq m_{\mathcal{H}}(\epsilon, \delta)$ i.i.d. examples generated by $\mathcal{D}$, then with at least probability $1 - \delta$, the following holds:
\begin{align}
\label{eq:apac}
    L_{\mathcal{D}}(A(S)) \leq \min_{h^\prime \in \mathcal{H}}L_{\mathcal{D}}(h^\prime) + \epsilon \,.
\end{align}
\end{definition}
Distribution $\mathcal{D}$ is {\em realizable} by hypothesis set $\mathcal{H}$ if $h^\ast \in \mathcal{H}$ exists such that 
$L_{\mathcal{D}}(h^\ast) = 0$.
If $\mathcal{D}$ is realizable by agnostic PAC-learnable hypothesis $\mathcal{H}$, then $\mathcal{H}$ is {\em PAC-learnable}.
If $\mathcal{H}$ is PAC-learnable, then Eq. (\ref{eq:apac}) becomes $L_{\mathcal{D}}(A(S)) \leq  \epsilon$ since $\min_{h^\prime \in \mathcal{H}} L_\mathcal{D}(h^\prime) = 0$.

\textbf{Rademacher complexity:} 
In the following sections, we use Rademacher complexity for deriving the generalization bounds. 
Given loss function $\ell(h, z)$ and hypothesis class $\mathcal{H}$, we denote $\mathcal{G}$ as
\begin{align*}
    \mathcal{G} \defeq \ell \circ \mathcal{H} \defeq \{z \mapsto \ell(h, z) : h \in \mathcal{H}\}.
\end{align*}

\begin{definition}{(Empirical Rademacher complexity)}
Let $\mathcal{G}$ be a family of functions mapping from $Z$ to $\mathbb{R}$, and let $S = (z_1, \ldots, z_m) \in Z^m$ be the training data of size $m$. Then 
{\em the empirical Rademacher complexity} of $\mathcal{G}$ with respect to $S$ is defined:
\begin{align*}
R_{S}(\mathcal{G}) \defeq \mathop{\mathbb{E}}_{\boldsymbol \sigma}\left[\sup_{g \in \mathcal{G}} \sum_{i=1}^{m} \sigma_i g(z_i) \right] \,,
\end{align*}
where $\boldsymbol{\sigma} = (\sigma_1, \ldots, \sigma_m) \in \{\pm 1\}^m$ are random variables distributed i.i.d. according to 
$\mathbb{P}[\sigma_i = 1] = \mathbb{P}[\sigma_i = -1] = 1/2$. {\em The Rademacher complexity} of $\mathcal{G}$ is defined as the expected value of
the empirical Rademacher complexity:
\begin{align*}
R_{m}(\mathcal{G}) \defeq \mathop{\mathbb{E}}_{S \sim \mathcal{D}^m}[R_S(\mathcal{G})] \, .
\end{align*}
\end{definition}

\section{Concurrent Verifier}
\label{sec:verifier}
Next we give a formal definition of a CV. A CV works with a machine learning model, which is function $h: \mathcal{X} \to \mathcal{Y}$.
If $x$ is given to the model, which outputs $h(x)$, then the verifier checks  whether  $(x, h(x))$ satisfies the required property. 
We assume that the required property can be represented as {\em requirement function} $c: (\mathcal{X}\times \mathcal{Y}) \to \{0, 1\}$. If $c(x, h(x)) = 1$, then the pair satisfies the property; if $c(x, h(x)) = 0$, then it does not. Requirement function $c$ can be represented by a set of deterministic rules. For example, if $\mathcal{X} = \mathbb{R}$ and 
$\mathcal{Y} = \{0, 1\}$, then the requirements can be in the following form: ``if $x > 0$, then $y \neq 0$.'' 
We assume that for all possible input $x \in \mathcal{X}$, there exists $y \in \mathcal{Y}$ such that $c(x, y) = 1$ for avoiding the situation where the requirements
are unsatisfiable for any output $y$. This assumption can be easily relaxed if we allow a machine learning model to reject unsatisfiable input $x$.

After checking the input-output pair, a verifier modifies output $h(x)$ depending on the value of $c(x, h(x))$.
If $c(x, h(x)) = 1$, the verifier outputs $h(x)$ since it satisfies the requirements. If $c(x, h(x)) = 0$, then the verifier modifies $h(x)$ to some $y \in \mathcal{Y}$ that satisfies $c(x, y) = 1$. If we use a verifier with a machine learning model that corresponds to $h$, then the combination of the model and the verifier can be seen as function $h_c : \mathcal{X} \to \mathcal{Y}$, defined as
\begin{align}
\label{eq:modify}
    h_c(x) \defeq \left\{ 
    \begin{array}{ll}
    h(x) & \mbox{ if } c(x, h(x)) = 1 \,\\
    y_c  &\mbox{ if } c(x, h(x)) = 0 \,
    \end{array}    
    \right. \,,
\end{align}
where $y_c \in \mathcal{Y}$ satisfies $c(x, y_c) = 1$ and is selected deterministically. When $\mathcal{Y} = [K]$, an example for selecting
minimum $i \in [K]$  satisfying $c(x, i) = 1$ as $y_c$ is a reasonable choice. When $\mathcal{Y} = [K]$ and $h(x)$ is made by scoring functions $h(x, y) : (\mathcal{X} \times \mathcal{Y}) \to \mathbb{R}$, it is also reasonable to select $y^\ast$ such that  $y^\ast = \argmax_{y \in \mathcal{Y}, c(x, y) = 1} h(x, y)$.
Learning a model corresponds to selecting hypothesis $h$ from hypothesis class $\mathcal{H}$. Therefore, learning a model with a CV corresponds to
choosing a hypothesis from the modified hypothesis class: $\mathcal{H}_c  = \{h_c : h \in \mathcal{H}\}$. By definition, every hypothesis in $\mathcal{H}_c$ satisfies the requirements, and thus we can guarantee that the model satisfies the condition if we select a hypothesis from $\mathcal{H}_c$.
In the following sections, we analyze the learnability of $\mathcal{H}_c$ by comparing it with that of $\mathcal{H}$. 

\section{Inference Time Verification}
\label{seq:ltv}
We first analyze the change of the generalization errors when we use a verifier only in an inference phase. In other words, requirements are unknown in the learning
phase,and we estimate hypothesis $\hat{h} = A(S)$ from hypothesis class $\mathcal{H}$ by using training data $S$ and algorithm $A$.
In the inference phase, we use a CV to modify $\hat{h}$ to $\hat{h}_c$. We call this setting the
{\em inference time verification } (ITV). 
This class of situations contains many exciting settings: 1) pre-trained machine learning models used in a wide range of applications, and 2) models that are hard to replace, which might encounter different requirements from those at the learning time in the long run.

In this section, we give analyses on a multi-class classification setting. We set $\mathcal{Y} = [K]$, and hypothesis class $\mathcal{\mathcal{H}}$ is set of mappings $h:\mathcal{X} \to  [K]$. We also assume that loss function $\ell$ is $\operatorname{0-1}$ loss defined as $\ell_{\operatorname{0-1}}(h, (x,y)) \defeq \mathbf{1}_{h(x) \neq y}$, where $\mathbf{1}$ is an indicator function. 

The following theorem shows a situation where ITV works well: a situation where the generalization error of $\hat{h}_c$ does not exceed that of the
other hypotheses in $\mathcal{H}_c$ with high probability. 
\begin{theorem}
If $\mathcal{Y} = [K]$, and hypothesis class $\mathcal{H}$ is PAC-learnable with 0-1 loss $\ell_{\operatorname{0-1}}$, training data $S$,  and algorithm $A$, then suppose that $\hat{h} = A(S)$ is a hypothesis estimated form $S$ satisfying
$L_{\mathcal{D}}(\hat{h}) \leq \epsilon$ for some parameter $\epsilon \in (0, 1)$. Then for any requirement $c$, hypothesis $\hat{h}_c$ obtained by modifying $\hat{h}$ with a CV satisfies
\begin{align*}
    L_{\mathcal{D}}(\hat{h}_c) \leq \min_{h_c \in \mathcal{H}_c} L_{\mathcal{D}}(h_c) + \epsilon \,.
\end{align*}
\end{theorem}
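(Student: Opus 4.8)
The plan is to compare the verified hypothesis $\hat{h}_c$ against both $\hat{h}$ and the best element of $\mathcal{H}_c$ by routing everything through a single quantity determined by the requirement $c$: the probability mass of examples whose \emph{true} label already violates the requirement. Write $p \defeq \mathbb{P}_{(x,y)\sim\mathcal{D}}[c(x,y)=0]$. The whole argument rests on the observation that, by construction, every hypothesis in $\mathcal{H}_c$ emits a label satisfying $c$, so on any example with $c(x,y)=0$ every such hypothesis is necessarily wrong. This will simultaneously give a lower bound $\min_{h_c \in \mathcal{H}_c} L_{\mathcal{D}}(h_c) \geq p$ and let me absorb the unavoidable ``damage'' done by the verifier into that same $p$.

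First I would establish the floor. Fix any $h_c \in \mathcal{H}_c$. Since $c(x, h_c(x)) = 1$ for all $x$ while $c(x,y)=0$ on the inconsistent set, we have $h_c(x) \neq y$ there, so $L_{\mathcal{D}}(h_c) \geq \mathbb{P}[c(x,y)=0] = p$; taking the minimum over $\mathcal{H}_c$ preserves this, giving $\min_{h_c \in \mathcal{H}_c} L_{\mathcal{D}}(h_c) \geq p$.

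Next I would upper-bound $L_{\mathcal{D}}(\hat{h}_c)$ by splitting the error probability according to the value of $c(x,y)$. On the event $c(x,y)=0$ the verified hypothesis always errs, contributing exactly $p$. On the complementary event $c(x,y)=1$ I would prove the pointwise implication $\hat{h}_c(x)\neq y \Rightarrow \hat{h}(x)\neq y$ by a short case split on whether the verifier fires: if $c(x,\hat{h}(x))=1$ then $\hat{h}_c(x)=\hat{h}(x)$ and the implication is immediate; if $c(x,\hat{h}(x))=0$ then $\hat{h}(x)\neq y$ already holds, since $\hat{h}(x)$ violates while $y$ satisfies the requirement. Hence the contribution of this event is at most $\mathbb{P}[\hat{h}(x)\neq y] = L_{\mathcal{D}}(\hat{h}) \leq \epsilon$, and combining the two pieces yields $L_{\mathcal{D}}(\hat{h}_c) \leq p + \epsilon \leq \min_{h_c \in \mathcal{H}_c} L_{\mathcal{D}}(h_c) + \epsilon$.

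The step needing the most care is the pointwise implication on the consistent event. It is tempting to hope that $\hat{h}_c$ simply never does worse than $\hat{h}$ at every point, but that is false: on the inconsistent set $\hat{h}$ may be correct while $\hat{h}_c$ must be wrong. The main obstacle is therefore conceptual rather than computational, namely recognizing that this unavoidable degradation is confined to the set $\{c(x,y)=0\}$ and that its mass $p$ is exactly the floor lower-bounding every hypothesis in $\mathcal{H}_c$; isolating that set and folding it into $p$ is what makes the $\epsilon$ slack carry over from $\hat{h}$ to $\hat{h}_c$.
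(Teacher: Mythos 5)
Your proof is correct, and it reaches the conclusion by a genuinely different (and slightly more general) route than the paper. The paper's proof leans on realizability: it takes the realizing function $f$ with $L_{\mathcal{D}}(f)=0$, introduces its verified version $f_c$, and proves the two-sided sandwich $L_{\mathcal{D}}(f_c) \leq \min_{h_c \in \mathcal{H}_c} L_{\mathcal{D}}(h_c)$ and $L_{\mathcal{D}}(\hat{h}_c) \leq L_{\mathcal{D}}(\hat{h}) + L_{\mathcal{D}}(f_c)$, the latter by observing that the verifier can only add error at points where it overrides a correct prediction, which forces $c(x,f(x))=0$ there. You instead route everything through $p = \mathbb{P}_{(x,y)\sim\mathcal{D}}[c(x,y)=0]$, splitting the error of $\hat{h}_c$ on the event $\{c(x,y)=0\}$ (where every member of $\mathcal{H}_c$ must err, giving both the floor $\min_{h_c}L_{\mathcal{D}}(h_c)\geq p$ and the exact contribution $p$) and the event $\{c(x,y)=1\}$ (where your pointwise implication $\hat{h}_c(x)\neq y \Rightarrow \hat{h}(x)\neq y$ is valid and caps the contribution by $\epsilon$). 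In the realizable case the two pivots coincide, since $y=f(x)$ almost surely gives $p = \mathbb{P}[f_c(x)\neq f(x)] = L_{\mathcal{D}}(f_c)$, so the quantitative content is identical; but your argument never invokes a zero-error $f$ at all. What this buys is a clarification of where realizability actually enters the theorem: it is needed only so that PAC-learnability delivers a hypothesis with $L_{\mathcal{D}}(\hat{h})\leq\epsilon$ in absolute terms, not for the comparison between $\hat{h}_c$ and $\mathcal{H}_c$ itself, which holds for arbitrary $\mathcal{D}$ under that premise. (This does not contradict the paper's Theorem 5.2, whose counterexample concerns a learner that is only $\epsilon$-close to $\min_{h\in\mathcal{H}}L_{\mathcal{D}}(h) > 0$ rather than to $0$.)
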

We give a proof in Appendix A. The proof bounds $L_{\mathcal{D}}(\hat{h}_c)$ using the fact that it is close to $L_\mathcal{D}(f_c)$, where $f_c$ is obtained by modifying $f : \mathcal{X} \to \mathcal{Y}$ to satisfy $L_\mathcal{D}(f) = 0$. The theorem suggests that if $\mathcal{H}$ is PAC-learnable, then inference time verification is sufficient to obtain a hypothesis with small generalization error in  $\mathcal{H}_c$.

Note that the generalization error might increase with a verifier, and the amount of the increase is always larger than $L_{\mathcal{D}}(f_c)$. Therefore, $L_{\mathcal{D}}(f_c)$ represents the discrepancy between data distribution $\mathcal{D}$ and requirement $c$, which is {\em consistent} with $f$ if $c(x, f(x)) = 1$ for all $x$. If $c$ is consistent with $f$, then $L_{\mathcal{D}}(f_c) = 0$, and we can certify that $L_\mathcal{D}(\hat{h}_c) \leq \epsilon$.

The above theorem shows that ITV works when $\mathcal{H}$ is PAC-learnable. However, this will not hold if $\mathcal{D}$ is not realizable with $\mathcal{H}$, i.e, $\mathcal{H}$ is not PAC-learnable.
\begin{theorem}
\label{thm:apac}
If $\mathcal{Y} = [K]$, the loss function is $\operatorname{0-1}$ loss $\ell_{\operatorname{0-1}}$ and hypothesis class $\mathcal{H}$ is not realizable with $\mathcal{D}$, and then
there exists training data $S$, algorithm $A$, requirements $c$, and $\epsilon \in (0, 1)$ such that $\hat{h} = A(S)$ satisfies $L_{\mathcal{D}}(\hat{h}) \leq \min_{h \in \mathcal{H}} L_{\mathcal{D}}(h) + \varepsilon$ but 
$L_{\mathcal{D}}(\hat{h}_c) > \min_{h_c \in \mathcal{H}_c}L_{\mathcal{D}}(h_c) + \epsilon$.
\end{theorem}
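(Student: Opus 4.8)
The plan is to refute the converse of the previous theorem by exhibiting one explicit instance --- a distribution, a non-realizable hypothesis class, a requirement, and a learner --- for which inference-time verification provably fails. The claim is naturally read as existential (it asserts that non-realizability is not sufficient for the guarantee of the previous theorem), so a single counterexample suffices; indeed a universal reading would be false, since for a singleton $\mathcal{H}$ one has $\hat{h}_c$ equal to the unique element of $\mathcal{H}_c$ and the strict inequality can never hold. The conceptual point I want the construction to isolate is that the map $h \mapsto h_c$ is not monotone in generalization error: in the agnostic regime the learner is only promised a hypothesis near-optimal \emph{inside} $\mathcal{H}$, yet two hypotheses with identical error in $\mathcal{H}$ can be sent by the verifier to modified hypotheses of very different quality, so an optimal element of $\mathcal{H}$ need not map to an optimal element of $\mathcal{H}_c$.

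Concretely, I would take $\mathcal{X} = \{x_1, x_2\}$, $\mathcal{Y} = [2]$, the uniform distribution $\mathcal{D}$ on the two points with deterministic labels $y(x_1) = 1$ and $y(x_2) = 2$, and the two constant maps $\mathcal{H} = \{h_1, h_2\}$ with $h_1 \equiv 2$ and $h_2 \equiv 1$. Each has $L_{\mathcal{D}} = 1/2$, so $\mathcal{D}$ is not realizable by $\mathcal{H}$ and $\min_{h \in \mathcal{H}} L_{\mathcal{D}}(h) = 1/2$. For the requirement I would forbid label $2$ on $x_1$ only: $c(x_1, 1) = 1$, $c(x_1, 2) = 0$, $c(x_2, 1) = c(x_2, 2) = 1$, so the deterministic fallback on $x_1$ is the correct label $1$. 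Note this $c$ is \emph{consistent} with the zero-error map $f^\ast(x_1) = 1,\ f^\ast(x_2) = 2$, giving $L_{\mathcal{D}}(f^\ast_c) = 0$; thus the failure cannot be blamed on a requirement that conflicts with $\mathcal{D}$.

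I would then compute $\mathcal{H}_c$. Modifying $h_1$ repairs its only error: the forbidden value $h_1(x_1) = 2$ is replaced by $1$ while $h_1(x_2) = 2$ is untouched, so $(h_1)_c$ equals $f^\ast$ and $L_{\mathcal{D}}((h_1)_c) = 0$. Modifying $h_2$ does nothing, since $h_2 \equiv 1$ already satisfies $c$ everywhere, so $(h_2)_c = h_2$ and $L_{\mathcal{D}}((h_2)_c) = 1/2$; hence $\min_{h_c \in \mathcal{H}_c} L_{\mathcal{D}}(h_c) = 0$. Finally I would let $A$ be empirical risk minimization; since $h_1$ and $h_2$ are both optimal in $\mathcal{H}$, choosing a sample $S$ with a strict majority of $x_1$-examples forces ERM to return $\hat{h} = h_2$, which still satisfies $L_{\mathcal{D}}(\hat{h}) = 1/2 = \min_{h \in \mathcal{H}} L_{\mathcal{D}}(h) \le \min_{h \in \mathcal{H}} L_{\mathcal{D}}(h) + \epsilon$. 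Taking any $\epsilon \in (0, 1/2)$, the verified hypothesis obeys $L_{\mathcal{D}}(\hat{h}_c) = L_{\mathcal{D}}((h_2)_c) = 1/2 > \epsilon = \min_{h_c \in \mathcal{H}_c} L_{\mathcal{D}}(h_c) + \epsilon$, which is exactly the claimed failure.

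The error computations are routine; the only real design obstacle is arranging the three properties simultaneously --- non-realizability of $\mathcal{H}$, consistency of $c$ with a zero-error map, and a genuine gap between $L_{\mathcal{D}}(\hat{h}_c)$ and $\min_{h_c} L_{\mathcal{D}}(h_c)$ --- while also confirming that the learner returning $\hat{h}$ legitimately meets the agnostic bound in the hypothesis. I expect the subtlety to be the consistency condition: because a consistent $c$ can only overwrite label values that are already wrong, the verifier never turns a correct prediction into an incorrect one, so I must place the two hypotheses' errors on \emph{different} inputs --- one where modification is triggered and repairs the error, one where modification is never triggered --- so that equal in-class error is amplified into a large gap in $\mathcal{H}_c$.
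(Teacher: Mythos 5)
Your proposal is correct and follows essentially the same construction as the paper's proof: a two-element hypothesis class whose members have equal error on disjoint regions of $\mathcal{X}$, together with a requirement $c$ (consistent with the zero-error labeling) that forbids exactly one hypothesis's erroneous outputs, so the verifier repairs one hypothesis but not the other. Your version is a concrete instantiation on a two-point domain and is in fact slightly more complete than the paper's, since you explicitly exhibit the algorithm (ERM) and the sample $S$ that force the learner to return the unrepairable hypothesis, which the paper leaves implicit.
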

We give in Appendix B a proof that shows a counterexample even if $c$ is consistent with ground truth $f$.
The above theorems show that the realizability of $\mathcal{H}$ is the key factor that distinguishes among the cases where ITV works well. Moreover, unlike the realizable case, Theorem~\ref{thm:apac} holds even if requirement $c$ is consistent with distribution $\mathcal{D}$. Let $f_{\mathcal{D}} : \mathcal{X} \to \mathcal{Y}$ be defined as the Bayes optimal predictor:
\begin{equation*}
    f_{\mathcal{D}}(x) \defeq \argmax_{y \in \mathcal{Y}} \mathbb{P}[y \mid x]\,.
\end{equation*}
The Bayes optimal predictor is optimal, in the sense that for every other classifier $g: \mathcal{X} \to \mathcal{Y}$, $L_{\mathcal{D}}(f_{\mathcal{D}}) \leq L_{\mathcal{D}}(g)$.
Theorem~\ref{thm:apac} holds if $c$ is consistent with $f_{\mathcal{D}}$. These results show that existing methods
\cite{hokamp-liu-2017-lexically,anderson-etal-2017-guided} using constraints only in the inference time might
fail to select the best hypothesis. 

\textbf{Running time analysis:} Using a CV increases the time needed for inference. Suppose that a verifier is an oracle that can answer the query about the 
value of $c(x, y)$. To achieve a previously shown modification procedure (\ref{eq:modify}), we need at most $K$ queries.

\section{Learning Time Verification}
In Section~\ref{seq:ltv}, we show that if $\mathcal{H}$ is PAC-learnable with 0-1 loss, then modifying a hypothesis at the inference time is sufficient to obtain a hypothesis with the smallest generalization error while satisfying the requirements.
If $\mathcal{H}$ is not PAC-learnable, then the ITV scheme might fail to obtain a hypothesis with small generalization error. Here we show that the generalization error can be bounded when we use a CV in the learning phase. We call this setting {\em learning time verification} (LTV). 

Since the LTV scheme corresponds to a learning task where the hypothesis class is $\mathcal{H}_c$, 
we analyze the learnability of $\mathcal{H}_c$ using the standard tools for generalization analyses. This paper provides analyses based on 
Rademacher complexity since its a  widely used tools that can give tight bounds for both data-dependent and data-independent cases. Moreover, some previous work gives bounds of structured prediction tasks using
Rademacher complexity. In the literature, constraints are actively used in structured prediction tasks, including
language generation and  sequence labeling. Therefore, analyzing the generalization error is important when using a CV on structured prediction tasks.

In the following, we first show the upper bounds of generation error based on the Rademacher complexity of $\mathcal{H}_c$
in a multi-class classification task (\S 6.1, 6.2)  and a structured prediction setting (\S 6.3). 
Our main finding is that the upper bounds based on the Rademacher complexity of $\mathcal{H}_c$ are always less than or equal to those of $\mathcal{H}$. Therefore, adding a CV to a machine learning model will not degrade its learnability.

\subsection{Multi-class Classification}

We first give the Rademacher complexity-based error bounds on a multi-class classification task, i.e., $\mathcal{Y} = [K]$. In this section, we
show that a standard upper bound  \cite{10.5555/2371238-mohri} based on the Rademacher complexity of $\mathcal{H}$  can be used as an upper bound of $\mathcal{H}_c$ for any requirement $c$. 
In the next section, we show that a state-of-the-art error bound, based on local Rademacher complexity $\mathcal{H}$, can also be 
used as an upper bound of $\mathcal{H}_c$.  

Following previous works, let $h: (\mathcal{X} \times \mathcal{Y}) \to \mathbb{R}$ be a scoring function, and define hypothesis class $\mathcal{H}$ as a set of scoring functions. A scoring function defines a mapping from $\mathcal{X}$ to $\mathcal{Y}$:
\begin{align*}
    x \mapsto \argmax_{y \in \mathcal{Y}} h(x, y) \,.
\end{align*}
Let $\rho_{h}(x, y)$ be {\em the margin of function} of $h$:
\[
    \rho_h(x, y) \defeq h(x, y) - \max_{y^\prime \neq y}h(x, y^\prime) \,.
\]
Hypothesis $h$ misclassifies the labeled example $(x, y)$ if $\rho_{h}(x, y) \leq 0$. Thus, by using a margin function, the 0-1 loss can be represented as $\ell_{\operatorname{0-1}}(h, z) = \mathbf{1}_{\rho_h(x, y) \leq 0}$. Since $\operatorname{0-1}$ loss is hard to handle during learning, we use margin loss $\ell_{\rho}(h, (x, y)) = \Phi_\rho(\rho_h(x, y))$, where $\Phi_\rho(t)$ is defined as
\begin{equation*}
\Phi_\rho(t) = \min(1, \max(0, 1 - t/\rho)) \,.
\end{equation*}
 Function $f: \mathbb{R} \to \mathbb{R}$ is said to be {\em $\mu$-Lipschitz} if $|f(t) - f(t^\prime)| \leq \mu |t - t^\prime|$ for any $t, t^\prime \in \mathbb{R}$. $\Phi_{\rho}$ is an $1/\rho$-Lipschitz function. The {\em empirical margin loss} of hypothesis $h$ is defined as
\begin{align*}
    L_{S, \rho}(h) \defeq \frac{1}{m}\sum_{i=1}^{m} \Phi_{\rho}(\rho_{h}(x_i, y_i)) \,.
\end{align*}

Identical to the case of ITV, introducing a CV to a machine learning model corresponds to 
modifying its corresponding hypothesis class $\mathcal{H}$ to hypothesis class $\mathcal{H}_c$ that is consistent with requirement $c$. If $h$ is a score function, then we define 
consistent function $h_c$:
\begin{align}
\label{eq:modify-h-multiclass}
    h_c(x, y) = \left\{
    \begin{array}{cc}
         h(x, y)&  \mbox{ if } c(x, y) = 1 \\
         -M & \mbox{ if } c(x, y) = 0
    \end{array}
    \right. \,,
\end{align}
where $M$ is a positive constant satisfying $M >  |\max_{(x,y) \in Z}h(x, y)|$. As described in Section~\ref{sec:verifier},  we assume that there exists $y\in \mathcal{Y}$ that satisfies $c(x, y) = 1$ for all $x \in \mathcal{X}$. Therefore, we can guarantee that $\rho_{h_c}(x, y) < 0$ if $c(x, y) = 0$.

The following are the main results of the general multi-class learning problem, which is based on the margin bound
shown in Theorem 9.2 of Mohri et al. \cite{10.5555/2371238-mohri}. Our main finding is that the generalization error
of any hypothesis, $h_c$, is bounded by the Rademacher complexity of hypothesis set $\mathcal{H}$, which suggests
that if we have a tight bound for hypothesis class $\mathcal{H}$, then we can expect to find a good hypothesis
from $\mathcal{H}_c$ under any requirements $c$.
\begin{theorem}
Let $\mathcal{H} \subseteq \mathbb{R}^{\mathcal{X} \times \mathcal{Y}}$  be a hypothesis class with $\mathcal{Y} = [K]$, and let $c$ be a requirement. Fix $\rho > 0$. Then for any $\delta > 0$, with probability at least $1 - \delta$, the following bound holds for all $h_c \in \mathcal{H}_c$:
\begin{align*}
    L_{\mathcal{D}}(h_c) \leq  L_{S, \rho}(h_c) + \frac{4K}{\rho} R_{m}(\Pi_{1}(\mathcal{H})) + \sqrt{\frac{\log{\frac{1}{\delta}}}{2m}} \,,
\end{align*}
where $\Pi_1(\mathcal{H})$ is defined as
\begin{align*}
    \Pi_1(\mathcal{H}) \defeq \{x \mapsto h(x, y): y \in \mathcal{Y}, h \in \mathcal{H}\}\,.
\end{align*}
\end{theorem}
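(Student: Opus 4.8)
The plan is to reduce to the standard multi-class margin bound (Theorem~9.2 of Mohri et al.) while showing that the constant $-M$ introduced by the verifier never enters an effective complexity term. A tempting first move---apply the base theorem verbatim to $\mathcal{H}_c$ and then argue $R_m(\Pi_1(\mathcal{H}_c)) \le R_m(\Pi_1(\mathcal{H}))$---does \emph{not} work. Because $-M$ can be made arbitrarily negative, the slice $x \mapsto h_c(x,y)$ for a label $y$ with $c(x,y)=0$ contributes a term of order $M$ to $R_m(\Pi_1(\mathcal{H}_c))$ (e.g.\ with one sample point the empirical complexity is $\tfrac12(a+M)$ rather than $\tfrac12|a-b|$), so that quantity is in general far larger than $R_m(\Pi_1(\mathcal{H}))$. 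Hence I would not bound $\Pi_1(\mathcal{H}_c)$ at all; instead I would re-run the derivation and excise the $-M$ contributions before any complexity term is formed.

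First I would invoke the generic symmetrization bound for the $[0,1]$-valued margin-loss class $\mathcal{G}_c = \{(x,y) \mapsto \Phi_\rho(\rho_{h_c}(x,y)) : h \in \mathcal{H}\}$, which reduces the theorem to showing $R_m(\mathcal{G}_c) \le R_m(\mathcal{G})$, i.e.\ that the loss-class complexity is no larger than for the unverified class. I would examine the empirical version $R_S(\mathcal{G}_c) = \mathbb{E}_{\boldsymbol{\sigma}} \sup_{h} \sum_{i=1}^m \sigma_i \Phi_\rho(\rho_{h_c}(x_i,y_i))$ and classify the indices using the \emph{fixed} data $(x_i,y_i)$ and the \emph{fixed} requirement $c$: (i) if $c(x_i,y_i)=0$ then $\rho_{h_c}(x_i,y_i)<0$ and the loss is the constant $1$; (ii) if $c(x_i,y_i)=1$ but every competitor $y' \neq y_i$ has $c(x_i,y')=0$, then the competing maximum equals $-M$ and the loss is the constant $0$; (iii) for the remaining index set $I^\ast$, the competing maximum is attained on an unmasked label, so $\rho_{h_c}(x_i,y_i) = h(x_i,y_i) - \max_{y' \in A_i} h(x_i,y')$ with $A_i = \{y' \neq y_i : c(x_i,y')=1\} \neq \emptyset$, a bounded margin free of any $-M$.

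Since this classification depends only on the fixed sample and $c$, the losses on groups (i) and (ii) are constants in $h$; they factor out of $\sup_h$ and vanish under $\mathbb{E}_{\boldsymbol{\sigma}}$. This leaves $R_S(\mathcal{G}_c) = \mathbb{E}_{\boldsymbol{\sigma}} \sup_{h} \sum_{i \in I^\ast} \sigma_i \Phi_\rho(\rho_{h_c}(x_i,y_i))$, a sum over a fixed subsample of a bounded margin loss. From here I would follow the original chain: apply Talagrand's contraction lemma ($\Phi_\rho$ is $1/\rho$-Lipschitz) to strip $\Phi_\rho$, split the margin into the true-label term $h(x_i,y_i)$ and the competitor term $-\max_{y' \in A_i} h(x_i,y')$, and control the latter by the usual max-of-classes lemma. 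The only two deviations from Mohri's argument are that the outer sum runs over $I^\ast \subseteq [m]$ rather than $[m]$, and that the inner maximum runs over $A_i \subseteq \{y' \neq y_i\}$ rather than all competitors; both are restrictions, so every resulting term is controlled by the same $R_m(\Pi_1(\mathcal{H}))$ quantity as in the unverified analysis. This yields $R_m(\mathcal{G}_c) \le R_m(\mathcal{G})$ and reproduces the constant $4K/\rho$ together with $R_m(\Pi_1(\mathcal{H}))$ of the \emph{original} class.

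The hard part will be making the two monotonicity facts rigorous and carrying out the constant removal cleanly. The subsample monotonicity $\mathbb{E}_{\boldsymbol{\sigma}} \sup_h \sum_{i \in I^\ast} \sigma_i f_h(x_i) \le \mathbb{E}_{\boldsymbol{\sigma}} \sup_h \sum_{i \in [m]} \sigma_i f_h(x_i)$ I would prove by conditioning on the signs indexed by $I^\ast$ and using Jensen to exchange $\mathbb{E}$ and $\sup$ on the complementary coordinates, whose Rademacher average is zero. The label-set monotonicity is exactly where the max-of-classes lemma must be re-examined to confirm that dropping masked competitors only helps, since $A_i$ is index-dependent. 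The conceptual crux is the recognition that every $-M$ entry is, given the fixed sample and $c$, a constant relative to the supremum over $h$---so it never enters a complexity term---which is precisely why the verifier's modification incurs no penalty and the bound for $\mathcal{H}_c$ never exceeds that for $\mathcal{H}$.
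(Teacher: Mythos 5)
Your proposal is essentially correct and would reach the theorem, but the crux is handled by a genuinely different device than the paper's. The paper also reduces to the machinery of Mohri et al.'s Theorem~9.2 and splits the margin class into the true-label term $\sum_i \sigma_i h_c(x_i,y_i)$ and the competitor-max term, but it disposes of the $-M$ entries with a single algebraic observation (the key lemma of Appendix~C): $h_c = \min\{h, b_0\}$, where $b_0(x,y)=\pm M$ according to $c(x,y)$ is one \emph{fixed} function, so the min-of-classes lemma gives $R_S(\mathcal{H}_c) \le R_S(\mathcal{H}) + R_S(\{b_0\}) = R_S(\mathcal{H})$, applied verbatim to both terms over the full sample and the full label set. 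Your route instead excises the masked contributions by a per-index case analysis and then needs two monotonicity facts---restriction of the outer sum to $I^\ast$ and of the inner max to $A_i$---both of which are provable (the Jensen argument you sketch for the first is standard; the second goes through because the penalties $\mathbf{1}_{y\notin A_i}$ are constants in $h$ and shift out of each class $\mathcal{F}_y$ before the max-of-classes lemma is applied), but both are exactly the work that the $\min\{h,b_0\}$ representation makes unnecessary. Your opening observation that $R_m(\Pi_1(\mathcal{H}_c))$ can be of order $M$ and hence cannot be compared to $R_m(\Pi_1(\mathcal{H}))$ is correct, and is precisely why the theorem is stated with $\Pi_1$ of the \emph{original} class; the paper's proof likewise never forms that quantity.

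One slip to repair: in your group (ii) (true label unmasked, all competitors masked) the margin is $h(x_i,y_i)+M$, so the loss is $\Phi_\rho(h(x_i,y_i)+M)$, which is the constant $0$ only if $M \ge \rho + \sup_{x,y}|h(x,y)|$, whereas the paper only requires $M > |\max_{x,y} h(x,y)|$. Either strengthen the choice of $M$ (harmless, since $M$ is a design parameter of the verifier), or keep these indices in $I^\ast$ and observe that their competitor term is the constant $-M$, which shifts out of the supremum and vanishes under $\mathbb{E}_{\boldsymbol \sigma}$ after the sub-additivity split while the true-label term is absorbed into the same $K R_m(\Pi_1(\mathcal{H}))$ bound. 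The final constant $4K/\rho$ is unaffected either way.
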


We give a proof in Appendix C. We obtain the results by showing that the upper bounds of the Rademacher complexity of $\mathcal{H}_c$ are bounded by some upper bounds of the Rademacher complexity of $\mathcal{H}$. All the proofs of the theorems in this section use similar techniques. Parameter $\rho$ sets the margin value. Following a previously shown technique \cite{10.5555/2371238-mohri}, we obtain a generalized bound that holds uniformly for all $\rho > 0$. The above theorem suggests that using a CV at 
a learning phase does not worsen the error bound for any requirement $c$. Intuitively, the theorem seems reasonable since requirements $c$ imposes a restriction on $\mathcal{H}$, and thus the complexity of $\mathcal{H}_c$ is not larger than $\mathcal{H}$. However, it is not so trivial since $\mathcal{H}_c \subseteq \mathcal{H}$ is not always true.

\textbf{Running time analysis:} We analyze the number of evaluations $c(x, y)$ required for learning with a CV. Let
$S_1$ be a sub-sequence of training example $S$ such that $c(x_i, y_i) = 1$, and let $S_0$ be a sub-sequence
such that $c(x_i, y_i) = 0$.  If we use a 0-1 loss function, then 
the empirical loss of hypothesis $h_c$ is
\begin{align*}
    L_S(h_c) = \frac{1}{m}\sum_{(x_i, y_i) \in S_1} \mathbf{1}_{h_c(x_i) \neq y_i}  + \frac{|S_0|}{m} \,,
\end{align*}
since every $h_c$  misclassifies the examples in $S_0$.  Therefore, we need at most $K|S_1| + |S_0|$ queries for the learning process. This is also true when we use a margin loss function. On the other hand, the problem of estimating the best hypothesis might be more difficult than the original problem depending on requirement $c$. 

\subsection{Tighter Bound Based on Local Rademacher Complexity}
The bound for $\mathcal{H}_c$ shown in the previous section is relatively simple, and tighter bounds 
of $\mathcal{H}$ based on the Rademacher complexity have been
developed in the literature. In this section, we show that the state-of-the-art error bound for $\mathcal{H}$ based on {\em the local Rademacher complexity} can be used as a bound for $\mathcal{H}_c$ for any requirements $c$.
\begin{definition}
Let $\mathcal{G}$ be a family of functions from $Z$ to $\mathbb{R}$,  and let $S$ be training data of size m. Then 
for any $r > 0$, {\em the empirical local Rademacher complexity} of $\mathcal{G}$ is defined as
\begin{align*}
    R_S(\mathcal{G}; r) = R_{S}\left(\{a g : a \in [0, 1],  g \in \mathcal{G}, \mathbb{E}[(ag)^2] \leq r \}\right)\,.
\end{align*}
\end{definition}
Li et al. \cite{NEURIPS2018_1141938b-multiclass} showed a tighter generalization bound for a multi-class classification problem using the local Rademacher complexity when the hypothesis class is a $\ell_p$ norm hypothesis space with kernel $\kappa$, defined as
\begin{align*}
    \mathcal{H}_{p, \kappa} \defeq \left\{h = (\langle \mathbf{w}_1, \phi(x) \rangle, \ldots,  \langle \mathbf{w}_K, \phi(x)\rangle) : \| \mathbf{w} \|_{2, p} \leq 1, 1 \leq p \leq 2  \right\} \,,
\end{align*}
where $h$ is represented as a vector valued function $(h_1, \ldots, h_K)$ with $h_j(x) = h(x, j), \forall j = 1, \ldots, K$, and $\kappa : \mathcal{X} \times \mathcal{X} \to \mathbb{R}$  is a Mercer kernel with associated feature map $\phi$, i.e., $\kappa(x, x^\prime) = \langle \phi(x), \phi(x^\prime)\rangle$. $\mathbf{w} = (\mathbf{w}_1, \ldots, \mathbf{w}_K)$, and $\| \mathbf{w}\| = \left[\sum_{i=1}^{K}\|\mathbf{w}\|_2^p\right]^\frac{1}{p}$ is the $\ell_{2, p}$-norm. For any $p \geq 1$, let
$q$ be the dual exponent of $p$ satisfying $1/p + 1/q = 1$. 
Let $\Phi: \mathbb{R} \to \mathbb{R}$ be a loss function satisfying the following: 1) $\mathbf{1}_{t\leq 0}(t) \leq \Phi(t)$ for all $t$; 2) $\Phi(t)$ is decreasing and has zero point $c_\Phi$; 3) $\Phi$ is $\zeta$-smooth, that is, $|\Phi^{\prime}(t) - \Phi^{\prime}(t^\prime)| \leq \zeta |t - t^\prime|$. 

Let $\mathcal{H}_{p, \kappa, c}$ be the hypothesis class obtained by modifying hypothesis $\mathcal{H}_{p, \kappa}$ to satisfy requirements $c$, and  $\mathcal{L}_c \defeq \{(x, y) \mapsto \Phi(\rho_{h_c}(x, y)) : h_c \in \mathcal{H}_{p, \kappa, c}\}$. The following theorem gives a bound of the local Rademacher complexity of $\mathcal{L}_c$.
\begin{theorem}
Let $\mathcal{H}_{p, \kappa, c}$ be the set of hypotheses obtained by modifying hypothesis $h \in \mathcal{H}_{p, \kappa}$ with requirement $c$. For any $\delta > 0$, with probability at least $1-\delta$, the following bound holds:
\begin{align*}
    R_m(\mathcal{L}_c; r) \leq \frac{C_{d, \vartheta} \xi(K) \sqrt{\zeta r}\log^\frac{3}{2}(m)}{\sqrt{m}} + \frac{4 \log \frac{1}{\delta}}{m} \,,
\end{align*}
where $\vartheta = \sup_{x \in \mathcal{X}} \kappa(x, x) < \infty$, $d = \sup_{t \in \mathbb{R}} \Phi(t) < \infty$, and $C_{d, \vartheta}$ is a constant.  $\xi(K)$ is
\begin{align*}
    \xi(K) = \left\{
    \begin{array}{cc}
         \sqrt{e}(4\log K)^{1+\frac{1}{2\log K}} &  \mbox{ if } q \geq 2 \log K, \\
         (2q)^{1 + \frac{1}{q}} K^\frac{1}{q}&  \mbox{ otherwise }.
    \end{array}\right.
\end{align*}
\end{theorem}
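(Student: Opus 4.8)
The plan is to reduce the statement to the bound that Li et al.~\cite{NEURIPS2018_1141938b-multiclass} already established for the \emph{unmodified} loss class $\mathcal{L} \defeq \{(x,y) \mapsto \Phi(\rho_h(x,y)) : h \in \mathcal{H}_{p,\kappa}\}$. Since the right-hand side of the claim is precisely their bound for $R_m(\mathcal{L}; r)$, it suffices to show that their entire derivation goes through verbatim for $\mathcal{L}_c$ with the same final expression, i.e.\ that passing from $\mathcal{H}_{p,\kappa}$ to $\mathcal{H}_{p,\kappa,c}$ never inflates the relevant complexity. Recall that their argument bounds $R_m(\mathcal{L};r)$ by (i) a contraction step that uses the $\zeta$-smoothness of $\Phi$ and the $1$-Lipschitzness of the margin map $h \mapsto \rho_h$ to reduce the local complexity of the loss class to the local complexity of the underlying RKHS ball, and then (ii) a kernel estimate that produces the factor $\xi(K)\sqrt{\zeta r}\,\log^{3/2}(m)/\sqrt{m}$ from $\vartheta=\sup_x\kappa(x,x)$. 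The high-probability concentration piece contributing $4\log(1/\delta)/m$ is purely a statement about converting the empirical variance constraint into its population version and is inherited unchanged; only the complexity comparison in step (i) is new.

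First I would record the structural consequences of the choice $M > |\max_{(x,y)\in Z} h(x,y)|$. Because $-M < h(x,y')$ for every $(x,y')$, the substitution $(\ref{eq:modify-h-multiclass})$ never lets the value $-M$ win the maximum over allowed labels: the modification amounts to (a) replacing the true-label score by the constant $-M$ exactly on the examples with $c(x_i,y_i)=0$, and (b) restricting the competitor set in $\max_{y'\neq y}h_c(x,y')$ to the labels $y'$ with $c(x,y')=1$. Consequently, on allowed examples the modified margin dominates the original one, $\rho_{h_c}(x_i,y_i)\ge\rho_h(x_i,y_i)$, so the loss there cannot increase, whereas on forbidden examples $\rho_{h_c}(x_i,y_i)<0$ and the loss is bounded by $d=\sup_t\Phi(t)$. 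These observations are exactly the analogues used in the proofs of the earlier theorems in this section.

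The heart of the proof is then the complexity comparison, which rests on two non-inflation mechanisms applied at the level of the RKHS-ball score class, \emph{before} the margin and $\Phi$ are composed. The first is constant drop-out: terms that are independent of $h$ (the frozen $-M$ coordinates, fixed once $S$ and $c$ are given) can be pulled out of the supremum by convexity and vanish in expectation over the Rademacher variables, so freezing coordinates to $-M$ adds no complexity. The second is a subsampling inequality: restricting the maximum to the allowed labels is an evaluation of the score class on a sub-collection of coordinate/label pairs, and by the standard conditioning-and-Jensen argument inserting the missing Rademacher terms only increases the supremum, hence the restricted complexity is no larger than the full one. Applying both reductions to $\mathcal{H}_{p,\kappa}$ and then composing back through the $1$-Lipschitz margin operator and the $\zeta$-smooth $\Phi$ via the (vector) contraction lemma lands $R_m(\mathcal{L}_c;r)$ inside the \emph{same} RKHS-ball local complexity that Li et al.\ bound, yielding the claimed rate.

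The main obstacle is the coupling introduced by $\max_{y'\neq y}$: the modification is not coordinate-wise in the loss, so the constant-drop-out and subsampling reductions cannot be applied directly to $\mathcal{L}_c$ and must instead be carried out on the score class and then transported through the margin and loss. Making this legitimate requires the vector contraction to be compatible with the local, variance-constrained complexity, so the delicate bookkeeping is to verify that the constraint $\expected[(ag)^2]\le r$ transfers correctly under contraction and under the passage from $\mathcal{H}_{p,\kappa,c}$ back to $\mathcal{H}_{p,\kappa}$ (using the forbidden-example loss bound $d$ to control the contributions from the indices with $c(x_i,y_i)=0$). Once this is in place, the kernel estimate of Li et al.\ supplies $\xi(K)\sqrt{\zeta r}\,\log^{3/2}(m)/\sqrt{m}$ and the concentration step supplies $4\log(1/\delta)/m$, completing the argument.
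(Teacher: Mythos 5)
Your overall strategy is the same as the paper's: both proofs keep Li et al.'s derivation intact and only replace their Lemma 1 (the step that bounds the Rademacher complexity of the loss class by a Gaussian complexity of the score class) with a version showing that the complexity of $\mathcal{L}_c$ is controlled by the \emph{same} Gaussian complexity of the unmodified $\mathcal{H}_{p,\kappa}$; the split into the true-label term $h_c(x_i,y_i)$ and the label-max term, and the observation that modification either leaves a coordinate untouched or freezes it to the constant $-M$, are exactly the paper's ingredients. Where you diverge is the device used for the label-max term. The paper adapts Lemma 4 of Lei et al.: it defines Gaussian processes indexed by $\mathcal{H}_c$ and $\mathcal{H}$ and verifies the increment inequality $|h_{cj}(x_i)-\tilde h_{cj}(x_i)|\le|h_j(x_i)-\tilde h_j(x_i)|$ (both sides are $0$ on forbidden coordinates, equal otherwise), so a Slepian-type comparison bounds the max-composed complexity of $\mathcal{H}_c$ directly by the double-sum Gaussian complexity of $\mathcal{H}$ with no loss in constants. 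You instead propose a vector contraction to reach the coordinate-wise double sum, followed by constant drop-out and a conditioning-and-Jensen subsampling step. That route is workable --- your two mechanisms are valid at the double-sum level, and they are morally the same fact as the increment inequality --- but Maurer's contraction introduces an extra $\sqrt{2}$ that the comparison-lemma route avoids, and it forces you to re-enter Li et al.'s proof at a different point, which is the source of the ``delicate bookkeeping'' you flag about transporting the variance constraint $\expected[(ag)^2]\le r$. In the paper's arrangement that worry evaporates: the modification is confined entirely to the single lemma preceding the local-complexity machinery, so the constraint, the $\zeta$-smoothness step, and the concentration term $4\log(1/\delta)/m$ are inherited verbatim. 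If you adopt the Gaussian-comparison formulation of your two mechanisms, your plan becomes essentially the paper's proof.
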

We give a proof in Appendix D. The bound equals that of $R_{m}(\mathcal{L}; r)$ (Theorem 1 of \cite{NEURIPS2018_1141938b-multiclass}) for any requirement $c$ and any hypothesis $\mathcal{H}_{\rho, \kappa}$. Therefore, the generalization error bounds based on  Theorem 1 of Li et al. \cite{NEURIPS2018_1141938b-multiclass}  holds for any requirement $c$.

\subsection{Analyses of Structured Prediction}

Structured prediction is a kind of multi-class classification task, where label set $\mathcal{Y}$ might be a set
of sequences, images, graphs, trees, or other objects admitting some possibly overlapping structure. 
As mentioned in Section~\ref{seq:introduction}, previous works try to impose constraints on the output of structured
prediction tasks. Thus it is also useful to derive error bounds for structured prediction tasks when we use a CV. In the following, we show that the Rademacher complexity-based generalization error bounds derived in a seminal work of Cortes et al. ~\cite{NIPS2016_535ab766-structure} also hold if we use a CV. 
Although tighter bounds are given in a more recent work~\cite{ijcai2021-0391-structure,NEURIPS2021_structure}, we give bounds based on Cortes et al.~\cite{NIPS2016_535ab766-structure} due to their simplicity.

We give some definitions for the structured prediction task. Following previous work, we assume that $\mathcal{Y}$ is
decomposable along with substructures: $\mathcal{Y} = \mathcal{Y}_1 \times \cdots \times \mathcal{Y}_l$. Here $\mathcal{Y}_k$
is a set of possible labels that can be assigned to the $k$-th substructure. We denote by $\mathsf{L}: \mathcal{Y} \times \mathcal{Y} \to \mathbb{R}_+$ a loss function that measures the dissimilarity of two elements of output space $\mathcal{Y}$.  $\mathsf{L}$  is {\em definite}, that is, $\mathsf{L}(y, y^\prime) = 0$ iff $y = y^\prime$. A typical definite loss function 
for a structured prediction task is the Hamming loss defined by $\mathsf{L}(y, y^\prime) = \frac{1}{l} \sum_{k=1}^{l} \mathbf{1}_{y_k \neq y_k^\prime}$  for all $y = (y_1, \ldots, y_l)$ and $y^\prime = (y_1^\prime, \ldots, y_l^\prime)$, with $y_k, y_k^\prime \in \mathcal{Y}_k$.
Other typical examples of loss functions can be seen in Cortes et al. \cite{NIPS2016_535ab766-structure}. Using loss function $\mathsf{L}$, the generalization and empirical error of $h$ are defined:
\begin{align*}
    L_{\mathcal{D}}(h) = \expected_{(x, y) \sim \mathcal{D}}[\mathsf{L}(\mathsf{h}(x), y)], ~~~~~    L_{S}(h) = \frac{1}{m} \sum_{i=1}^{m} \mathsf{L}(\mathsf{h}(x), y) \,.
\end{align*}

As with the multi-class classification task, hypothesis class $\mathcal{H}$ can be represented as a set of scoring function $h : \mathcal{X} \times \mathcal{Y} \to \mathbb{R}$. We use $\mathsf{h}(x)$ to represent the predictor defined by $h \in \mathcal{H}$: $\mathsf{h}(x) \defeq \argmax_{y\in \mathcal{Y}} h(x, y)$ for all $x \in X$. Following the previous work, we assume that each scoring function can be decomposed as a sum, and such decomposition follows
a {\em factor graph}. Factor graph $G$ is a tuple $G = (V, F, E)$, where $V$ is a set of variable nodes, $F$ is a set of factor nodes,
and $E$ is a set of undirected edges between a variable node and a factor node. Every node in $V$ corresponds to a substructure index, where 
$V = \{1, \ldots, l\}$. 

\begin{figure}[t]
\centering
\includegraphics[width=0.65\hsize,keepaspectratio]{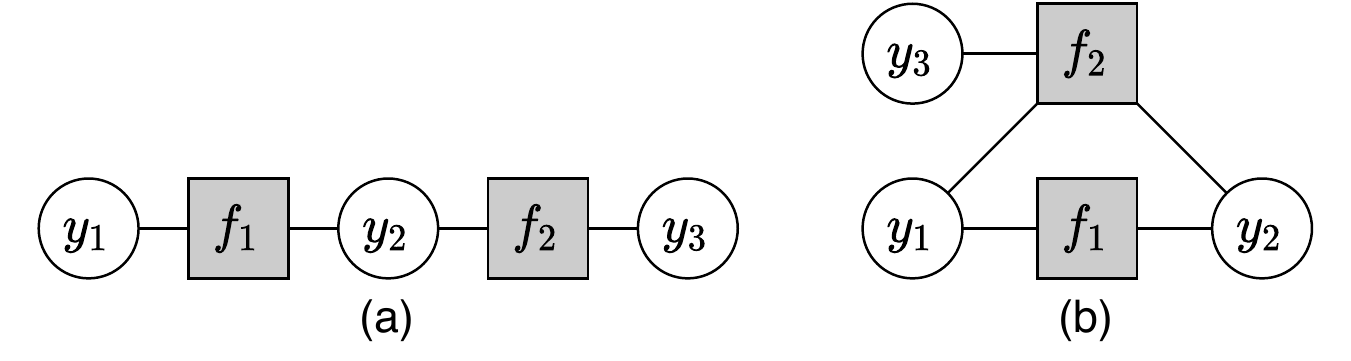}
\caption{Example of factor graphs: (a) represents decomposition $h(x, y) = h_{f_1}(x, y_1, y_2) + h_{f_2}(x, y_2, y_3)$ and (b) represents decomposition $h(x, y) = h_{f_1}(x, y_1, y_2) + h_{f_2}(x, y_1, y_2, y_3)$.}
\label{fig:factor}
\end{figure}

For any factor node $f$, we denote by $\mathcal{N}(f) \subseteq V$ a set of variable nodes connected to $f$ and define $\mathcal{Y}_f$ as substructure set cross-product $\mathcal{Y}_f = \prod_{k\in \mathcal{N}(f)} \mathcal{Y}_k$. Then $h$ admits the following decomposition 
as a sum of functions $h_f$, each taking as an argument a pair of $(x, y_f) \in \mathcal{X} \times \mathcal{Y}_f$:
\begin{align}
\label{eq:factor}
    h(x, y) = \sum_{f \in F} h_f(x, y_f)\,.
\end{align}
Figure~\ref{fig:factor} shows examples of decompositions based on factor graphs. We conventionally assume that the structure of the factor graphs may 
change depending on a particular example $(x_i, y_i)$: $G(x_i, y_i) = G_i = ([l_i], F_i, E_i)$. A special case of this setting
is when size $l_i$ of each example is allowed to vary. In such a case, the number of possible labels $\mathcal{Y}$ is potentially infinite. 

Following multi-class classification, our CV maps hypothesis $h$ to $h_c$ to satisfy the 
requirements. The definition of $h_c$ follows Eq. (\ref{eq:modify-h-multiclass}). This definition does not require $h_c$ to have a factored representation.

For analyzing the complexity, Cortes et al. \cite{NIPS2016_535ab766-structure} introduced {\em empirical factor graph Rademacher complexity} $R_{S}^{G}(\mathcal{H})$ of
hypothesis class $\mathcal{H}$ for  $S = (x_1, \ldots, x_m)$ and factor graph $G$:
\[
    R_{S}^{G}(\mathcal{H}) = \frac{1}{m}\expected_{\boldsymbol \epsilon}\left[ 
    \sup_{h \in \mathcal{H}} \sum_{i=1}^m \sum_{f \in F_i} \sum_{y \in \mathcal{Y}_f} \sqrt{|F_i|} \epsilon_{i, f, y} h_f(x_i, y)
    \right] \,,
\]
where ${\boldsymbol \epsilon} = (\epsilon_{i, f, y})_{i \in [m], f\in F_i, y \in \mathcal{Y}_f}$ and every $\epsilon_{i, f, y}$ is i.i.d. a Rademacher random variable. Factor graph Rademacher complexity of $\mathcal{H}$ for factor graph $G$ is defined as expectation 
\[R_{m}^{G}(\mathcal{H}) \defeq \expected_{S \sim \mathcal{D}^m}[R_S^{G}(\mathcal{H})] \,.
\]

By using the factor graph Rademacher complexity, Cortes et al. \cite{NIPS2016_535ab766-structure} 
gives bounds for a structured prediction task with the 
following additive and multiplicative empirical losses:
\begin{align*}
   L_{S, \rho}^{\mathrm{add}}(h) \defeq &\frac{1}{m} \sum_{i=1}^{m}
    \left[\Phi^\ast \left(\max_{y^\prime \neq y_i} \mathsf{L}(y^\prime, y_i) -  
    \frac{1}{\rho}\left(h(x_i, y_i) - h(x_i, y^\prime)\right)  \right)
    \right]\displaybreak[1]\\
    L_{S, \rho}^{\mathrm{mult}}(h) \defeq & \frac{1}{m} \sum_{i=1}^{m}
    \left[\Phi^\ast \left(\max_{y^\prime \neq y_i} \mathsf{L}(y^\prime, y_i) \left(1-
    \frac{1}{\rho}\left(h(x_i, y_i) - h(x_i, y^\prime)\right)\right)\right)
    \right]     \, ,
\end{align*}
where   $\Phi^\ast(t) = \min(B, \max(0, t))$ for all $t$, with $B = \max_{y, y^\prime} \mathsf{L}(y, y^\prime)$. As shown in  \cite{NIPS2016_535ab766-structure}, these loss functions cover typical surrogate loss functions used in structured prediction tasks.
We show the following bound for structured predictions.
\begin{theorem}
Fix $\rho > 0$. For any $\delta > 0$ and requirement $c$, with probability at least $1 - \delta$ over the draw of sample $S$ of size $m$ from distribution $\mathcal{D}$, the following holds for all $h_c \in \mathcal{H}_c$:
\begin{align*}
    L_\mathcal{D}(h_c) &\leq L_{\mathcal{D}, \rho}^{\mathrm{add}}(h_c) \leq L_{S, \rho}^{\mathrm{add}}(h_c) + \frac{4\sqrt{2}}{\rho} R_{m}^{G}(\mathcal{H}) + B\sqrt{\frac{\log \frac{1}{\delta}}{2m}} \\
    L_\mathcal{D}(h_c) &\leq L_{\mathcal{D}, \rho}^{\mathrm{mult}}(h_c) \leq L_{S, \rho}^{\mathrm{mult}}(h_c) + \frac{4\sqrt{2}B}{\rho} R_{m}^{G}(\mathcal{H}) + B\sqrt{\frac{\log \frac{1}{\delta}}{2m}} \, .
\end{align*}
\end{theorem}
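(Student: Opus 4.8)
The plan is to mirror the proof of Cortes et al.~\cite{NIPS2016_535ab766-structure} applied to the modified class $\mathcal{H}_c$, isolating the single step that uses the factored form of the scoring function and routing around it. Each stated bound is a chain of two inequalities, which I would treat separately. The first inequality, $L_{\mathcal{D}}(h_c) \le L_{\mathcal{D}, \rho}^{\mathrm{add}}(h_c)$ (and its multiplicative counterpart), asserts that the surrogate loss dominates the true loss pointwise. This follows from the margin structure alone: writing $\hat{y} = \mathsf{h}_c(x)$, the true loss is $0$ when $\hat{y} = y$, while if $\hat{y} \neq y$ then $h_c(x, \hat{y}) \ge h_c(x, y)$ forces the argument of $\Phi^\ast$ to be at least $\mathsf{L}(\hat{y}, y)$, so monotonicity of $\Phi^\ast$ gives the claim. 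Since this uses nothing about $h_c$ beyond its being a scoring function, it transfers verbatim, and so does its expectation over $\mathcal{D}$.

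For the second inequality I would apply the standard Rademacher generalization bound for the $[0,B]$-valued loss family $\mathcal{G}_c = \{(x,y) \mapsto \Phi^\ast(\cdots h_c \cdots) : h \in \mathcal{H}\}$; this bound is valid for an arbitrary function class and contributes the $B\sqrt{\log(1/\delta)/(2m)}$ term directly. The whole theorem then reduces to showing that $R_m(\mathcal{G}_c)$ satisfies the same estimate in terms of $R_m^G(\mathcal{H})$ that~\cite{NIPS2016_535ab766-structure} establishes for $R_m(\mathcal{G})$, namely a bound of order $\frac{1}{\rho} R_m^G(\mathcal{H})$ (with an extra factor $B$ in the multiplicative case). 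Because $h_c$ is \emph{not} factorizable, the factor-graph complexity $R_m^G(\mathcal{H}_c)$ is not even defined, so I cannot apply Cortes et al.'s decomposition to $\mathcal{H}_c$. Instead I would prove $R_m(\mathcal{G}_c) \le R_m(\mathcal{G})$, with $\mathcal{G}$ the loss family of the \emph{unmodified} $\mathcal{H}$, and then quote their bound $R_m(\mathcal{G}) \le \frac{2\sqrt{2}}{\rho} R_m^G(\mathcal{H})$; the factorization is used only in this last step, applied to the genuinely factorizable $\mathcal{H}$.

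The comparison $R_m(\mathcal{G}_c) \le R_m(\mathcal{G})$ is the crux and the main obstacle. The difficulty is that clamping $h_c(x,y) = -M$ when $c(x,y) = 0$ is not monotone: clamping the true-label coordinate downward \emph{increases} the surrogate, while clamping a competitor coordinate downward \emph{decreases} it, so neither supremum dominates the other pointwise and $\mathcal{H}_c \not\subseteq \mathcal{H}$. The property I would exploit is that $c$ is \emph{hypothesis-independent}: for every $h \in \mathcal{H}$ the same coordinates $\{(i,y) : c(x_i,y)=0\}$ are frozen to the identical constant $-M$. Consequently the loss on example $i$ under $h_c$ depends on $h$ only through the \emph{free} coordinates $\{h(x_i, y) : c(x_i,y)=1\}$ and remains a $\Phi^\ast$-composition of a max of affine functions whose slopes are bounded by $1/\rho$ exactly as in the unmodified case (for a clamped true label the example's loss even becomes constant in $h$ once $M$ is large enough to saturate $\Phi^\ast$). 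Peeling off $\Phi^\ast$ and the inner $\max$ by the contraction arguments of~\cite{NIPS2016_535ab766-structure}, whose Lipschitz constants are unaffected by clamping, reduces $R_m(\mathcal{G}_c)$ to the Rademacher complexity of the free-coordinate score class, which is a coordinate-restriction of the full score class and hence no larger; this yields $R_m(\mathcal{G}_c) \le R_m(\mathcal{G})$.

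The multiplicative bound follows the same three-step scheme: the surrogate property again transfers for free, and the only change in the Rademacher estimate is that the factor $\mathsf{L}(y', y_i) \le B$ multiplying the margin slope inflates the relevant Lipschitz constant by $B$, reproducing the coefficient $\frac{4\sqrt{2}B}{\rho}$. I expect the most delicate bookkeeping to be (i) completing the reduction \emph{before} the factorization step, since $h_c$ has no factored form, and (ii) checking that the contraction-and-restriction argument reproduces Cortes et al.'s exact constant $\frac{2\sqrt{2}}{\rho}$ together with the $\sqrt{|F_i|}$ factor-graph weighting rather than a weaker constant.
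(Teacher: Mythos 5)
Your proposal is correct and follows essentially the same route as the paper's proof: the paper likewise reduces everything to the observation that clamping is a per-coordinate contraction, $|h_c(x,y)-\tilde h_c(x,y)| \le |h(x,y)-\tilde h(x,y)|$ (zero on frozen coordinates, equality on free ones, since $c$ is hypothesis-independent), so every Lipschitz constant in Cortes et al.'s chain survives and the final vector-contraction step (their Lemma 5) lands on the factored coordinates of the \emph{unmodified} $h$, yielding $R_m(\mathcal{H}_{c1}) \le \frac{2\sqrt{2}}{\rho}R_m^G(\mathcal{H})$ and hence the stated constants. The one cosmetic difference is your intermediate target $R_m(\mathcal{G}_c)\le R_m(\mathcal{G})$, which is neither needed nor actually delivered by the contraction argument you sketch (contraction only upper-bounds $R_m(\mathcal{G}_c)$ by the same quantity that upper-bounds $R_m(\mathcal{G})$, not by $R_m(\mathcal{G})$ itself); the paper skips this detour and bounds the modified loss class against $R_m^G(\mathcal{H})$ in one pass, after first splitting off the $h_c(x_i,y_i)/\rho$ term by sub-additivity of the supremum.
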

We give a proof in Appendix E. $\rho$ is a parameter that determines the margin. Similar to the case of multi-class classification, we can derive a bound that holds for any $\rho > 0$ following a previous derivation \cite{NIPS2016_535ab766-structure}.
The above result indicates that the bound will not change if we use
a CV for any requirement $c$. This is interesting since the above result holds even if we do not have a factored representation of $h_c(x, y)$, similar to Eq. (\ref{eq:factor}), although the derived bound depends on the factor graph Rademacher complexity, which depends on the factored representation of $h(x, y)$. 

We analyzed the overhead of the running time for evaluating loss function $L_{S, \rho}^{\mathrm{add}}(h_c)$ and $L_{S, \rho}^{\mathrm{mult}}(h_c)$ for hypothesis $h_c$. Different from the multi-class classification case, both the number of queries and the overhead of the running time for the loss evaluation when we use a CV depend on the model and the type of requirements for structured predictions. This result is consistent with the literature, which reports that for structured prediction tasks, original
tractable optimization problems can be intractable if we put additional constraints~\cite{roth2005integer}. 

\section{Conclusion}

This paper gives a generalization analysis when there are requirements that the input-output pairs of a machine learning model must satisfy. We introduce a concurrent verifier, a simple module that enables us to 
guarantee that the input-output pairs of a machine learning model satisfy the requirements. 
We show a situation where we can obtain a hypothesis with small error when we use a verifier only in the
inference phase. Interestingly, if $\mathcal{H}$ is not PAC-learnable, we might fail to obtain a guaranteed hypothesis
even if the requirements are consistent with distribution $\mathcal{D}$. 
We also give the generalization bounds based on Rademacher complexity when we use a
verifier in a learning phase and find that the obtained bounds are less than or equal to the existing 
ones, independent of the machine learning model and the type of requirements. 

\section*{Acknowledgements}
The authors thank the anonymous reviewers for their valuable feedback, corrections, and suggestions.
This work was supported by JST PRESTO (Grant Number JPMJPR20C7, Japan) and JSPS KAKENHI (Grant Number JP20H05963, Japan).
{
\small
\bibliographystyle{plain}
\bibliography{neurips_2022}
}

\appendix

\section{Proof of Theorem 5.1}
\begin{proof}
Since $\mathcal{D}$ is realizable, $f : \mathcal{X} \to \mathcal{Y}$ exists such that $L_{\mathcal{D}}(f) = 0$. 
We show that $L_{\mathcal{D}}(f_c) \leq L_{\mathcal{D}}(\hat{h}_c) \leq L_{\mathcal{D}}(f_c) + \epsilon$. 

We first prove $L_{\mathcal{D}}(f_c) \leq L_{\mathcal{D}}(\hat{h}_c)$. From the definition of $f_c$, if $f(x) \neq f_c(x)$, then $c(x, f(x)) = 0$, and  thus $h_c(x) \neq f(x)$ for all $h_c \in \mathcal{H}_c$. Therefore,  
 for all $h_c \in \mathcal{H}_c$,
\begin{equation*}
L_{\mathcal{D}}(f_c) = \mathbb{P}[f_c(x) \neq f(x)] \leq \mathop{\mathbb{P}}[h_c(x) \neq f(x)] = L_{\mathcal{D}}(h_c).
\end{equation*}
Next we prove  $ L_{\mathcal{D}}(\hat{h}_c) \leq L_{\mathcal{D}}(f_c) + \epsilon$. For all $h \in \mathcal{H}$, we show that  $L_{\mathcal{D}}(h_c) - L_\mathcal{D}(h)$ is bounded:
\begin{align*}
L_{\mathcal{D}}(h_c) - L_{\mathcal{D}}(h) \leq
\mathbb{P}[h_c(x) \neq h(x) \mbox{ and } h(x) = f(x) ] \nonumber
\leq \mathbb{P}[f_c(x) \neq f(x)] = L_\mathcal{D}(f_c) \, ,
\end{align*}
where the first inequality uses the fact that the error increases if we modify the output at $x$ and satisfy $f(x) = h(x)$. 
Since $c(x, f(x)) = 0$ for such $x$, the probability is less than error $L_{\mathcal{D}}(f_c)$.
Thus, $L_\mathcal{D}(\hat{h}_c) \leq L_{\mathcal{D}}(\hat{h}) + L_{\mathcal{D}}(f_c) \leq L_{\mathcal{D}}(f_c) + \epsilon$. 
\end{proof}

\section{Proof of Theorem 5.2}
\begin{proof}
Suppose that $\mathcal{H} = \{h_0, h_1\}$, and $f \not \in \mathcal{H}$ exists, satisfying $L_\mathcal{D}(f) = 0$. 
Suppose partition $\mathcal{X}_0, \mathcal{X}_1$ of $\mathcal{X}$ exists such that $h_{0}(x) \neq f(x)$ iff $x \in \mathcal{X}_0$ and $h_{1}(x) \neq f(x)$  iff $x \in \mathcal{X}_1$. Suppose that $|L_{\mathcal{D}}(h_0)  - L_{\mathcal{D}}(h_1)| \leq \epsilon$ for some $\epsilon \in (0, 1)$. If we design $c$ such that $c(x, y) = 0$  iff 
$x \in \mathcal{X}_0$ and $y = h_0(x)$, otherwise $c(x, y) = 1$. Then the generalization error of modified hypothesis $h_{c1}$ becomes $L_{\mathcal{D}}(h_{c0}) = 0$, and $L_{\mathcal{D}}(h_{c1}) = L_{\mathcal{D}}(h_{1})$. Thus
if $L_{\mathcal{D}}(h_1) > \epsilon$, and then difference  $|L_{\mathcal{D}}(h_{c0})  - L_{\mathcal{D}}(h_{c1})|$ becomes larger than $\epsilon$.
\end{proof}
Note that the above proof holds for $c$, which is consistent with $f$; that is, $c(x, f(x)) = 1$ for all $x \in \mathcal{X}$.

\section{Proof of Theorem 6.1}

We first introduce Talagrand's lemma with which we prove the main theorem.
\begin{lemma}[Talagrand's lemma, \cite{ledoux1991probability-talagrand,10.5555/2371238-mohri} ]
\label{lemma:talagrand}
Let $\Phi$ be the $\mu$-Lipschitz function from $\mathbb{R}$ to $\mathbb{R}$, and let $\sigma_1, \ldots, \sigma_m$ be Rademacher random variables. Then for any hypothesis set $\mathcal{H}$ of real-valued functions, the following inequality holds:
\begin{align*}
    R_{S}(\Phi \circ \mathcal{H})  \leq \mu R_{S}( \mathcal{H}) \,.
\end{align*}
\end{lemma}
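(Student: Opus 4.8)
The plan is to establish this contraction inequality (the Ledoux--Talagrand contraction principle) by the standard \emph{peeling} argument: I reduce the global statement to a one-coordinate claim and then iterate over the $m$ summands. Writing $R_S(\Phi \circ \mathcal{H}) = \mathbb{E}_{\boldsymbol\sigma}[\sup_{h \in \mathcal{H}} \sum_{i=1}^m \sigma_i \Phi(h(z_i))]$, I would show that for each index $k$ one may replace the summand $\sigma_k \Phi(h(z_k))$ by $\mu\,\sigma_k h(z_k)$ without decreasing the expected supremum, while leaving the other summands untouched. Applying this replacement successively for $k = 1, \ldots, m$ converts every $\Phi(h(z_i))$ into $\mu\, h(z_i)$, after which pulling the nonnegative constant $\mu$ out of the supremum gives exactly $\mu R_S(\mathcal{H})$.

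For the one-coordinate step (say $k = m$), I would condition on $\sigma_1, \ldots, \sigma_{m-1}$ and take the expectation over $\sigma_m$ alone. Setting $u(h) \defeq \sum_{i=1}^{m-1} \sigma_i \Phi(h(z_i))$, the inner expectation over $\sigma_m \in \{\pm 1\}$ equals $\frac{1}{2}\sup_{h} [u(h) + \Phi(h(z_m))] + \frac{1}{2}\sup_{h}[u(h) - \Phi(h(z_m))]$. Given any $\eta > 0$, I pick near-maximizers $h_1, h_2 \in \mathcal{H}$ for the two suprema, each to within $\eta$, so that the displayed quantity is at most $\frac{1}{2}[u(h_1) + u(h_2) + \Phi(h_1(z_m)) - \Phi(h_2(z_m))] + \eta$.

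The crux is the next manoeuvre. I set $s = \operatorname{sgn}(h_1(z_m) - h_2(z_m))$ and invoke the $\mu$-Lipschitz property to bound $\Phi(h_1(z_m)) - \Phi(h_2(z_m)) \leq \mu |h_1(z_m) - h_2(z_m)| = \mu s (h_1(z_m) - h_2(z_m))$. Substituting and regrouping yields $\frac{1}{2}[u(h_1) + \mu s\, h_1(z_m)] + \frac{1}{2}[u(h_2) - \mu s\, h_2(z_m)] + \eta$, and each bracket is dominated by the corresponding supremum over $h$. Since $s \in \{\pm 1\}$, the average of these two suprema is precisely $\mathbb{E}_{\sigma_m}[\sup_h (u(h) + \mu\, \sigma_m h(z_m))]$; letting $\eta \to 0$ closes the coordinate step, and taking the outer expectation over $\sigma_1, \ldots, \sigma_{m-1}$ and iterating over the remaining coordinates completes the proof. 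I expect this sign-and-recombination step to be the main obstacle: one must simultaneously exploit the Lipschitz bound and the freedom to relabel the two independent near-maximizers so that the $\{\pm 1\}$ symmetry of $\sigma_m$ is restored. It is worth noting that no auxiliary hypothesis such as $\Phi(0) = 0$ is required, because the Lipschitz estimate involves only the difference $\Phi(h_1(z_m)) - \Phi(h_2(z_m))$.
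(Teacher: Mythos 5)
Your proof is correct and complete: the one-coordinate conditioning on $\sigma_1,\ldots,\sigma_{m-1}$, the $\eta$-near-maximizers, the sign trick $s=\operatorname{sgn}(h_1(z_m)-h_2(z_m))$ combined with the Lipschitz bound, and the observation that the $\{\pm1\}$ symmetry of $\sigma_m$ absorbs $s$ are exactly the standard peeling argument. The paper itself states this lemma without proof, citing Ledoux--Talagrand and Mohri et al., and your argument is precisely the proof given in that cited source, so there is nothing to reconcile.
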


We also use the following lemma.
\begin{lemma}[Lemma 9.1 of Mohri et al. \cite{10.5555/2371238-mohri}]
\label{lemma:maxmany}
Let $\mathcal{F}_1, \ldots, \mathcal{F}_l$ be $l$ hypothesis sets in $\mathbb{R}^{\mathcal{X}}$, $l \geq 1$, and let
$\mathcal{G} = \{\max\{h_1, \ldots, h_l\} : h_i \in \mathcal{F}_i, i \in [l] \}$. Then for any training data $S$ of size $m$, 
the empirical Rademacher complexity of $\mathcal{G}$ can be upper bounded:
\begin{align*}
    R_S(\mathcal{G}) \leq \sum_{j=1}^{l}R_S(\mathcal{F}_j) \,.
\end{align*}
We use $\max\{h_1, \ldots, h_l\}$ to represent the mapping from $\mathcal{X}$ to $\mathbb{R}$ that maps $x \in \mathcal{X}$ to $\max\{h_1(x), \ldots, h_l(x)\}$.
The above inequality holds if we use $\min$ instead of $\max$ in the definition of $\mathcal{G}$. 
\end{lemma}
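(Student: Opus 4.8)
The plan is to prove the bound for $\max$ first and then deduce the $\min$ case by a sign flip, reducing everything to the two-class case $l = 2$ via induction. For the inductive step I would write $\max\{h_1, \ldots, h_l\} = \max\{\max\{h_1, \ldots, h_{l-1}\}, h_l\}$, treat the class of inner maxima $\mathcal{G}' = \{\max\{h_1, \ldots, h_{l-1}\} : h_j \in \mathcal{F}_j\}$ as a single hypothesis set, and apply the two-class bound to $\mathcal{G}'$ and $\mathcal{F}_l$, combined with the inductive hypothesis $R_S(\mathcal{G}') \leq \sum_{j=1}^{l-1} R_S(\mathcal{F}_j)$. Thus the whole argument rests on establishing $R_S(\max\{\mathcal{F}_1, \mathcal{F}_2\}) \leq R_S(\mathcal{F}_1) + R_S(\mathcal{F}_2)$.

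For the two-class case I would use the pointwise identity $\max(a, b) = \frac{1}{2}(a + b + |a - b|)$, so that for every $x$ we have $\max\{h_1, h_2\}(x) = \frac{1}{2}(h_1(x) + h_2(x) + |h_1(x) - h_2(x)|)$. Substituting into the definition of $R_S$ and using the subadditivity of the supremum, $\sup(f + g) \leq \sup f + \sup g$, splits the complexity into two halves:
\begin{align*}
    R_S(\mathcal{G}) \leq \frac{1}{2}\expected_{\boldsymbol\sigma}\Bigl[\sup_{h_1, h_2} \sum_{i=1}^m \sigma_i\bigl(h_1(x_i) + h_2(x_i)\bigr)\Bigr] + \frac{1}{2}\expected_{\boldsymbol\sigma}\Bigl[\sup_{h_1, h_2}\sum_{i=1}^m \sigma_i |h_1(x_i) - h_2(x_i)|\Bigr] \,.
\end{align*}
The first term separates over $h_1$ and $h_2$ because its summand is additive in the two hypotheses, which range independently; taking the expectation yields exactly $\frac{1}{2}(R_S(\mathcal{F}_1) + R_S(\mathcal{F}_2))$.

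The heart of the argument, and the step I expect to be the main obstacle, is controlling the second term involving $|h_1 - h_2|$. Here I would introduce the difference class $\mathcal{F}_1 - \mathcal{F}_2 \defeq \{h_1 - h_2 : h_1 \in \mathcal{F}_1, h_2 \in \mathcal{F}_2\}$, noting that the supremum over $(h_1, h_2)$ is the same as the supremum over $g \in \mathcal{F}_1 - \mathcal{F}_2$ of $\sum_i \sigma_i |g(x_i)|$. Since $t \mapsto |t|$ is $1$-Lipschitz, Talagrand's Lemma~\ref{lemma:talagrand} (with $\mu = 1$) gives $\expected_{\boldsymbol\sigma}[\sup_{h_1, h_2} \sum_i \sigma_i |h_1(x_i) - h_2(x_i)|] \leq R_S(\mathcal{F}_1 - \mathcal{F}_2)$. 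It then remains to verify $R_S(\mathcal{F}_1 - \mathcal{F}_2) = R_S(\mathcal{F}_1) + R_S(\mathcal{F}_2)$, which follows by separating the supremum over $h_1$ and $h_2$ and using that $-\sigma_i$ has the same distribution as $\sigma_i$, so that $\expected_{\boldsymbol\sigma}[\sup_{h_2}\sum_i (-\sigma_i) h_2(x_i)] = R_S(\mathcal{F}_2)$. Combining the two halves gives $R_S(\mathcal{G}) \leq R_S(\mathcal{F}_1) + R_S(\mathcal{F}_2)$, completing the two-class case and, with it, the induction.

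Finally, for the $\min$ variant I would use $\min\{h_1, \ldots, h_l\} = -\max\{-h_1, \ldots, -h_l\}$ together with the fact that the Rademacher complexity is invariant under negating every hypothesis (again by the symmetry of the Rademacher variables), so that $R_S(-\mathcal{F}_j) = R_S(\mathcal{F}_j)$. Applying the already-proved $\max$ bound to the negated classes $-\mathcal{F}_1, \ldots, -\mathcal{F}_l$ then yields the identical upper bound $\sum_{j=1}^l R_S(\mathcal{F}_j)$, so no separate argument is needed. The only genuinely delicate point throughout is the $|h_1 - h_2|$ term; the induction, the additive split, and the $\min$ reduction are all routine once the contraction step is in place.
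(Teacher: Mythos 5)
Your proposal is correct, and it is essentially the canonical argument: the paper itself gives no proof of this lemma (it is imported verbatim as Lemma 9.1 of Mohri et al.), and your proof --- induction down to the two-class case, the identity $\max(a,b)=\frac{1}{2}\left(a+b+|a-b|\right)$, Talagrand's contraction lemma applied to the $1$-Lipschitz function $t \mapsto |t|$ on the difference class, and the symmetry $R_S(-\mathcal{F}_j)=R_S(\mathcal{F}_j)$ for the $\min$ variant --- is exactly how the cited result is proved in that reference. All steps check out, including the exact (not merely upper-bounded) splitting of the supremum over the independently ranging $h_1, h_2$ in the additive and difference terms.
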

The following lemma shows the relationship between the Rademacher complexities of 
$\mathcal{H}$ and $\mathcal{H}_c$.
\begin{lemma}
\label{lemma:max}
Let $\mathcal{H}$ be a hypothesis set in $\mathbb{R}^{\mathcal{X} \times \mathcal{Y}}$, and let $c : \mathcal{X} \times \mathcal{Y} \to \{0, 1\}$  be the requirements. Then for any training data $S = ((x_1, y_1), \ldots, (x_m, y_m))$ of size $m$, the following inequality holds:
\begin{align*}
R_S(\mathcal{H}_c) &= \expected_{\boldsymbol \sigma}\left[
\sup_{h_c \in \mathcal{H}_c}\sum_{i=1}^{m}\sigma_i h_c(x_i, y_i)
\right] \\
&\leq  \expected_{\boldsymbol \sigma}\left[
\sup_{h \in \mathcal{H}}\sum_{i=1}^{m}\sigma_i h(x_i, y_i)
\right] = R_S(\mathcal{H}) \,.
\end{align*}
\end{lemma}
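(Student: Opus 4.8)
The plan is to exploit the fact that the empirical Rademacher complexity evaluates hypotheses only at the \emph{fixed} sample points $(x_i, y_i)$, where, by Eq.~(\ref{eq:modify-h-multiclass}), each $h_c$ either coincides with $h$ or takes the sample-independent value $-M$. First I would partition the index set $[m]$ according to the (fixed) requirement values, writing $I_1 \defeq \{i \in [m] : c(x_i, y_i) = 1\}$ and $I_0 \defeq \{i \in [m] : c(x_i, y_i) = 0\}$. Because the map $h \mapsto h_c$ ranges over all of $\mathcal{H}_c$, the supremum over $h_c \in \mathcal{H}_c$ equals the supremum over $h \in \mathcal{H}$ of $\sum_{i \in I_1} \sigma_i h(x_i, y_i) - M \sum_{i \in I_0} \sigma_i$.

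Next, since the term $-M \sum_{i \in I_0}\sigma_i$ does not depend on $h$, it factors out of the supremum, and its expectation over $\boldsymbol\sigma$ vanishes as each $\sigma_i$ is mean-zero. This yields the identity $R_S(\mathcal{H}_c) = \expected_{\boldsymbol\sigma}\big[\sup_{h \in \mathcal{H}} \sum_{i \in I_1} \sigma_i h(x_i, y_i)\big]$, an expression involving only the coordinates $(\sigma_i)_{i \in I_1}$. Note that this step uses only that the $I_0$-values are a constant independent of $h$; the defining property $M > |\max_{(x,y)} h(x,y)|$ is not needed for \emph{this} lemma, though $M$ must be a single constant for the whole class for the factoring-out to be valid.

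The comparison with $R_S(\mathcal{H})$ is the crux. I would split $\boldsymbol\sigma = (\boldsymbol\sigma_{I_1}, \boldsymbol\sigma_{I_0})$, condition on $\boldsymbol\sigma_{I_1}$, and take the inner expectation over $\boldsymbol\sigma_{I_0}$. Applying the elementary inequality $\expected[\sup] \geq \sup[\expected]$ (convexity of the supremum) to the inner expectation and using $\expected_{\sigma_i}[\sigma_i h(x_i, y_i)] = 0$ for $i \in I_0$, the $I_0$-terms are annihilated, leaving $\sup_{h} \sum_{i \in I_1}\sigma_i h(x_i, y_i)$. Taking the outer expectation over $\boldsymbol\sigma_{I_1}$ then gives $R_S(\mathcal{H}) \geq \expected_{\boldsymbol\sigma}\big[\sup_h \sum_{i \in I_1}\sigma_i h(x_i, y_i)\big] = R_S(\mathcal{H}_c)$, which is the claim.

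The main obstacle is conceptual rather than computational: one cannot invoke monotonicity of Rademacher complexity, since $\mathcal{H}_c \not\subseteq \mathcal{H}$ in general (as the paper emphasizes just before the theorem). The argument instead rests on observing that passing from $h$ to $h_c$ merely replaces the values at the $I_0$-samples by a constant, which both factors out of the supremum and averages to zero under the Rademacher variables; the inequality itself then emerges entirely from the Jensen-type step $\expected[\sup] \geq \sup[\expected]$ applied to the $\boldsymbol\sigma_{I_0}$-average.
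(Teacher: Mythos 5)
Your proof is correct, but it takes a genuinely different route from the paper's. The paper proves the inequality structurally: it introduces the single function $b_0(x,y)=M$ if $c(x,y)=1$ and $-M$ otherwise, observes that $\mathcal{H}_c=\{\min\{h,b_0\}:h\in\mathcal{H}\}$ (this is where the size condition $M>|\max_{(x,y)}h(x,y)|$ is actually used), and then invokes Lemma 9.1 of Mohri et al.\ (sub-additivity of empirical Rademacher complexity under pointwise $\min$) together with the fact that the singleton class $\{b_0\}$ has zero Rademacher complexity, giving $R_S(\mathcal{H}_c)\leq R_S(\mathcal{H})+R_S(\{b_0\})=R_S(\mathcal{H})$. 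You instead work entirely at the sample points: partitioning $[m]$ into $I_1$ and $I_0$ by the value of $c(x_i,y_i)$, factoring the $h$-independent constant $-M\sum_{i\in I_0}\sigma_i$ out of the supremum and killing it in expectation, and then comparing with $R_S(\mathcal{H})$ via conditioning on $\boldsymbol\sigma_{I_1}$ and the Jensen step $\mathbb{E}[\sup]\geq\sup[\mathbb{E}]$ over $\boldsymbol\sigma_{I_0}$. Both arguments are sound. Yours is more elementary (it needs no external lemma), yields the stronger \emph{identity} $R_S(\mathcal{H}_c)=\mathbb{E}_{\boldsymbol\sigma}\bigl[\sup_{h}\sum_{i\in I_1}\sigma_i h(x_i,y_i)\bigr]$ rather than just an upper bound, and correctly isolates the fact that the magnitude condition on $M$ is irrelevant to this lemma (it matters only for guaranteeing $\rho_{h_c}(x,y)<0$ on forbidden pairs, and for the paper's $\min\{h,b_0\}$ representation). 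The paper's approach, on the other hand, packages the argument in a form that composes with the max/min machinery it reuses in the surrounding proofs.
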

\begin{proof}
Let $b_0(x, y): \mathcal{X} \times \mathcal{Y} \to \mathbb{R}$ be the following function:
\begin{align*}
    b_0(x, y) = \left\{ 
    \begin{array}{cc}
        M & \mbox { if } c(x, y) = 1\\
       -M & \mbox { if } c(x, y) = 0
    \end{array}
    \right. \,.
\end{align*}
Let $\mathcal{B}_0 = \{b_0\}$. Then  $\mathcal{H}_c$ can be represented as $\mathcal{H}_c = \{\min\{h, b_0\}: h\in \mathcal{H}, b_0 \in \mathcal{B}_0 \}$. From Lemma~\ref{lemma:maxmany},
\begin{align*}
    R_{S}(\mathcal{H}_c) \leq R_S(\mathcal{H}) + R_S(\mathcal{B}_0) \,.
\end{align*}
Since the empirical Rademacher complexity of singleton hypothesis class $\mathcal{B}_0$ is zero for any $S$, 
$R_S(\mathcal{H}_c) \leq R_S(\mathcal{H})$ holds.
\end{proof}

\begin{proof}[Proof of Theorem 6.1]
Let us define two sets of mappings, $\mathcal{H}_{c0}$ and $\mathcal{H}_{c1}$:
\begin{align*}
\mathcal{H}_{c0} &\defeq  \{(x, y) \mapsto \rho_{\theta, h_c}(x, y): h_c \in \mathcal{H}_c\},
\end{align*}
where we define $\rho_{\theta, h}(x, y)$:
\begin{align*}.
    \rho_{\theta, h}(x, y) \defeq \min_{y^\prime \in \mathcal{Y}}(h(x, y) - h(x, y^\prime) + \theta \mathbf{1}_{y^\prime = y}) \,,
\end{align*}
where $\theta > 0$ is an arbitrary constant. $\rho_{\theta, h}$ satisfies $\mathbb{E}[\mathbf{1}_{\rho_h(x, y) \leq 0}] \leq  \mathbb{E}[\mathbf{1}_{\rho_{\theta, h}(x, y) \leq 0} ]$ since  $\rho_{\theta, h}(x, y) \leq \rho_{h}(x, y)$ holds for
all $(x, y) \in \mathcal{X} \times \mathcal{Y}$.
Following the proof of Theorem 9.2 of Mohri et al. \cite{10.5555/2371238-mohri} with a probability of at least $1 - \delta$, for all $h_c \in \mathcal{H}_c$:
\begin{align*}
    L_{\mathcal{D}}(h_c) \leq L_{S}(h_c) + \frac{2}{\rho} R_{m}(\mathcal{H}_{c0}) + \sqrt{\frac{\log{\frac{1}{\delta}}}{2m}}\,.
\end{align*}
Thus, to complete the proof it suffices to show $R_{m}(\mathcal{H}_{c0}) \leq 2K R_{m}(\Pi_1(\mathcal{H}))$. We can upper bound $R_m(\mathcal{H}_{c0})$:
\begin{align*}
R_{m}(\mathcal{H}_{c0}) \leq 
\frac{1}{m} \mathop{\mathbb{E}}_{S, {\boldsymbol \sigma}}
\left[ 
\sup_{h_{c} \in \mathcal{H}_{c}}\sum_{i=1}^{m} \sigma_i h_c(x_i, y_i) 
\right] + 
\frac{1}{m} \mathop{\mathbb{E}}_{S, {\boldsymbol \sigma}}
\left[ 
\sup_{h_{c} \in \mathcal{H}_{c}}\sum_{i=1}^{m} \sigma_i \max_{y\in \mathcal{Y}}(h_c(x_i, y) - 2\rho \mathbf{1}_{y = y_i})
\right] \, .
\end{align*}
By applying lemma~\ref{lemma:max},  the first term is bounded by $R_{m}(\mathcal{H})$. Then we follow a previous
proof of Mohri et al. \cite{10.5555/2371238-mohri} to see that $R_{m}(\mathcal{H}) \leq K R_{m}(\Pi_{1}(\mathcal{H}))$. 
We bound the second term:
\begin{align*}
&\frac{1}{m}\mathop{\mathbb{E}}_{S, {\boldsymbol \sigma}} \left[ 
\sup_{h_c \in \mathcal{H}_c} \sum_{i=1}^{m} \sigma_i \max_{y \in \mathcal{Y}}(h_c(x_i, y) - 2\rho \mathbf{1}_{y=y_i})
\right] \\
 \leq& \sum_{y\in \mathcal{Y}}\frac{1}{m}\mathop{\mathbb{E}}_{S, {\boldsymbol \sigma}} 
\left[
\sup_{h_c \in \mathcal{H}_c} \sum_{i=1}^{m} \sigma_i (h_c(x_i, y) - 2\rho \mathbf{1}_{y=y_i})
\right] \\
=& \sum_{y\in \mathcal{Y}}\frac{1}{m}\mathop{\mathbb{E}}_{S, {\boldsymbol \sigma}} 
\left[
\sup_{h_c \in \mathcal{H}_c} \sum_{i=1}^{m} \sigma_i h_c(x_i, y) 
\right] \\
\leq& \sum_{y\in \mathcal{Y}}\frac{1}{m}\mathop{\mathbb{E}}_{S, {\boldsymbol \sigma}} 
\left[
\sup_{h_c \in \mathcal{H}_c} \sum_{i=1}^{m} \sigma_i h(x_i, y) 
\right] \leq K R_{m}(\Pi_1(\mathcal{H})) \, .
\end{align*}
We use lemma~\ref{lemma:maxmany} to derive the second line and lemma~\ref{lemma:max} to derive the
fourth line.
\end{proof}

\section{Proof of Theorem 6.3}

Our proof is made by changing a part of a previous proof of theorem 1 of Li et al. \cite{NEURIPS2018_1141938b-multiclass}. 
We first introduce empirical Gaussian complexity and a lemma.

\begin{definition}{(Empirical Gaussian complexity)}
Let $\mathcal{G}$ be a family of functions mapping from $Z$ to $\mathbb{R}$, and let
$S = (z_1, \ldots, z_m) \in Z^m$ be training data of size $m$. Then {\em the empirical 
Gaussian complexity} of $\mathcal{G}$ with respect to  $S$ is defined:
\[
\mathfrak{G}_{S}(\mathcal{G}) =  \frac{1}{m} \expected_{\boldsymbol g}
\left[
\sup_{f \in \mathcal{G}}  \sum_{i=1}^m g_i f(z_i)
\right] \,,
\] 
where $g_1, \ldots, g_m$ are independent $N(0, 1)$ random variables.
\end{definition}

We need the following lemma, which is based on lemma 4 of Lei et al. \cite{NIPS2015_3a029f04-multi-svm}.
\begin{lemma}
\label{lemma:gaussian}
Let $\mathcal{H}$ be a hypothesis class of mappings $\mathcal{X} \times \mathcal{Y} \to \mathbb{R}$, where $\mathcal{Y} = [K]$. $h \in \mathcal{H}$ is represented as 
vector $h = (h_{1}, \ldots, h_{K})$.  Let $c$ be a requirement, 
and let $g_1, \ldots, g_{mK}$ be  $N(0, 1)$ random variables. Then for any training data $S = (x_1, \ldots, x_m)$ of size $m$, we have:
\begin{align}
\label{eq:gaussian}
\mathfrak{G}_S(\{\max\{h_{c1}, \ldots, h_{cK}\} : h_c \in \mathcal{H}_c\})
\leq \frac{1}{m} \expected_{\boldsymbol g}
\left[
\sup_{h \in \mathcal{H}}
\sum_{i=1}^m \sum_{j=1}^{K} g_{(j-1)m+i} h_j(x_i)
\right] \,.
\end{align}
\end{lemma}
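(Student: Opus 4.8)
The plan is to recognize both sides of Eq.~(\ref{eq:gaussian}) as the expected suprema of two centered Gaussian processes indexed by $\mathcal{H}$ and to compare them with the Sudakov--Fernique inequality. Since $S=(x_1,\dots,x_m)$ is fixed, each hypothesis $h$ is represented by the finite vector $(h_j(x_i))_{i\in[m],\,j\in[K]}\in\mathbb{R}^{mK}$, so it suffices to work over this finite-dimensional index set and no separability issue arises. Writing the empirical Gaussian complexity out and using $\mathcal{H}_c=\{h_c:h\in\mathcal{H}\}$, the left-hand side equals $\frac{1}{m}\expected_{\boldsymbol g}[\sup_{h\in\mathcal{H}} X_h]$ with $X_h \defeq \sum_{i=1}^m g_i \max_{j\in[K]} h_{cj}(x_i)$, while the right-hand side is $\frac{1}{m}\expected_{\boldsymbol g}[\sup_{h\in\mathcal{H}} Y_h]$ with $Y_h \defeq \sum_{i=1}^m\sum_{j=1}^K g_{(j-1)m+i}\, h_j(x_i)$. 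Both processes are centered Gaussian, so by Sudakov--Fernique it is enough to verify the increment inequality $\expected_{\boldsymbol g}[(X_h-X_{h'})^2]\le \expected_{\boldsymbol g}[(Y_h-Y_{h'})^2]$ for every $h,h'\in\mathcal{H}$.

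Next I would compute the two increments. Because the Gaussians are i.i.d.\ $N(0,1)$, independence gives $\expected_{\boldsymbol g}[(X_h-X_{h'})^2]=\sum_{i=1}^m\bigl(\max_j h_{cj}(x_i)-\max_j h'_{cj}(x_i)\bigr)^2$ and, likewise, $\expected_{\boldsymbol g}[(Y_h-Y_{h'})^2]=\sum_{i=1}^m\sum_{j=1}^K\bigl(h_j(x_i)-h'_j(x_i)\bigr)^2$. The heart of the argument is a per-sample Lipschitz chain. First, the coordinatewise maximum is $1$-Lipschitz in the sup-norm, so $|\max_j h_{cj}(x_i)-\max_j h'_{cj}(x_i)|\le \max_j |h_{cj}(x_i)-h'_{cj}(x_i)|$. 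Second---and this is where the verifier enters---by the representation $h_{cj}=\min\{h_j,b_{0,j}\}$ from the proof of Lemma~\ref{lemma:max}, where the threshold $b_{0,j}(x_i)\in\{-M,M\}$ depends only on $c$ and the point, not on the hypothesis, the map $t\mapsto\min\{t,b_{0,j}(x_i)\}$ is $1$-Lipschitz, hence $|h_{cj}(x_i)-h'_{cj}(x_i)|\le |h_j(x_i)-h'_j(x_i)|$. Combining these two bounds and passing from the sup-norm to the Euclidean norm yields $|\max_j h_{cj}(x_i)-\max_j h'_{cj}(x_i)|\le \sqrt{\sum_{j=1}^K (h_j(x_i)-h'_j(x_i))^2}$ for each $i$.

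Squaring and summing over $i$ gives exactly $\expected_{\boldsymbol g}[(X_h-X_{h'})^2]\le \expected_{\boldsymbol g}[(Y_h-Y_{h'})^2]$, so Sudakov--Fernique yields $\expected_{\boldsymbol g}[\sup_h X_h]\le \expected_{\boldsymbol g}[\sup_h Y_h]$, and dividing by $m$ is the claimed inequality. The main obstacle is the increment comparison: one must check that the CV modification can only contract pairwise distances, which hinges on $h_{cj}$ and $h'_{cj}$ being truncations against the \emph{same} hypothesis-independent threshold $b_{0,j}(x_i)$. This is precisely what lets a single Gaussian-comparison step absorb both the passage from the $\max$-process (one Gaussian per sample) to the decoupled block-Gaussian sum (one Gaussian per sample--class pair) and the replacement of $\mathcal{H}_c$ by $\mathcal{H}$, so that no separate step removing the modification is needed.
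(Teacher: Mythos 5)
Your proposal is correct and follows essentially the same route as the paper's proof: both define the two centered Gaussian processes $\sum_i g_i \max_j h_{cj}(x_i)$ and $\sum_{i,j} g_{(j-1)m+i} h_j(x_i)$, bound the increments via the $1$-Lipschitzness of the coordinatewise max together with the hypothesis-independent truncation $|h_{cj}(x_i)-\tilde h_{cj}(x_i)|\le|h_j(x_i)-\tilde h_j(x_i)|$, and conclude by a Gaussian comparison theorem (the paper invokes Lemma A.1 of Lei et al., which plays the role of your Sudakov--Fernique step).
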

\begin{proof}
We make a proof by modifying the proof for lemma 4 of Lei et al. \cite{NIPS2015_3a029f04-multi-svm}. Define two 
Gaussian processes indexed by $\mathcal{H}_c$ and $\mathcal{H}$:
\begin{align*}
\mathfrak{X}_{h_{c}} &\defeq  \sum_{i=1}^{m}\left[
g_i \max\{h_{c1}(x_i), \ldots, h_{cK}(x_i)\}
\right] \,,\\
\mathfrak{Y}_{h} &\defeq  \sum_{i=1}^{m}\sum_{j=1}^K 
g_{(j-1)m+i} h_j(x_i) \,.
\end{align*}
For any $h = (h_1, \ldots, h_K)$ and $\tilde{h} = (\tilde{h}_1, \ldots, \tilde{h}_K) \in \mathcal{H}$, the independence of $g_i$ and equalities $\expected [g_i^2] = 1$
imply that
\begin{align*}
    \expected[(\mathfrak{X}_{h_c} - \mathfrak{X}_{\tilde{h}_c})^2] &= 
    \sum_{i=1}^m [\max\{h_{c1}(x_i), \ldots, h_{cK}(x_i)\} - 
    \max\{\tilde{h}_{c1}(x_i), \ldots, \tilde{h}_{cK}(x_i)\}]^2 \\\
        \expected[(\mathfrak{Y}_{h} - \mathfrak{Y}_{\tilde{h}})^2] &= 
    \sum_{i=1}^m [(h_{1}(x_i) - \tilde{h}_1(x_i))^2 + \cdots + (h_{K}(x_i) - \tilde{h}_K(x_i)^2] \,.
\end{align*}
For any $\mathbf{a} = (a_1, \ldots, a_K)$ and $\mathbf{b} = (b_1, \ldots, b_K) \in \mathbb{R}^K$, it can be directly checked that
\[
|\max\{a_1, \ldots, a_K\} - \max\{b_1, \ldots, b_K\}| 
\leq \max\{|a_1 - b_1|, \ldots, |a_K - b_K|\} 
\leq \sum_{i=1}^{K}|a_i - b_i| \, .
\]
Using the above inequality, we have the following bounds between $\mathfrak{X}_{h_c}$ and $\mathfrak{Y}_h$ for all $h, \tilde{h} \in \mathcal{H}$:
\begin{align*}
\expected [(\mathfrak{X}_{h_c} -\mathfrak{X}_{\tilde{h}_c})^2 ]
&\leq \sum_{i=1}^m \max\{
|h_{c1}(x_i) - \tilde{h}_{c1}(x_i)|, \ldots, |h_{cK}(x_i) - \tilde{h}_{cK}(x_i)|
\}^2\\
& = \sum_{i=1}^m \max\{
|h_{c1}(x_i) - \tilde{h}_{c1}(x_i)|^2, \ldots, |h_{cK}(x_i) - \tilde{h}_{cK}(x_i)|^2
\}\\
&\leq \sum_{i=1}^m \sum_{j=1}^{K}  |h_{cj}(x_i) - \tilde{h}_{cj}(x_i)|^2\\
&\leq \sum_{i=1}^m \sum_{j=1}^{K}  |h_{j}(x_i) - \tilde{h}_{j}(x_i)|^2 = 
\expected [(\mathfrak{Y}_h - \mathfrak{Y}_{\tilde{h}})^2] . 
\end{align*}
Finally, we can prove the lemma using this inequality and lemma A.1 of Lei et al. \cite{NIPS2015_3a029f04-multi-svm}.
\end{proof}
Note that the lemma holds if we substitute $h_j(x_i)$  with $h_j(x_i) + a_{ij}$, and $h_{cj}(x_i)$ with $h_{cj}(x_i) + a_{ij}$ in Eq. (\ref{eq:gaussian}), where $(a_{11}, \ldots, a_{mK})$ are constants. 

We substitute lemma 1 of Li et al. \cite{NEURIPS2018_1141938b-multiclass} with 
the following lemma.
\begin{lemma}
The empirical Rademacher complexity of $\mathcal{L}_c$ with example $S$ of size $m$ satisfies the following:
\[
R_{S}(\mathcal{L}_c) \leq \frac{\sqrt{2\pi}}{m}\expected_{\boldsymbol g}\sup_{h=(h_1,\ldots, h_K)\in \mathcal{H}_{p, \kappa}}
\sum_{i=1}^{m}\sum_{j=1}^{K} g_{(j-1)n+i} h_j(x_i) \,,
\]
where $g_1, \ldots, g_{nK}$ are independent random variables following Gaussian distribution $N(0, 1)$.
\end{lemma}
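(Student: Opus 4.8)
The plan is to mirror the proof of Lemma~1 of Li et al.~\cite{NEURIPS2018_1141938b-multiclass}, replacing its appeal to Lei et al.'s comparison lemma by our Lemma~\ref{lemma:gaussian}; this is the single step that transfers the bound from the modified class $\mathcal{H}_{p,\kappa,c}$ back to the original class $\mathcal{H}_{p,\kappa}$ and is the reason the right-hand side involves $\mathcal{H}$ rather than $\mathcal{H}_c$. First I would pass from the empirical Rademacher complexity to the empirical Gaussian complexity via the standard comparison $R_S(\mathcal{L}_c) \leq \sqrt{\pi/2}\,\mathfrak{G}_S(\mathcal{L}_c)$, which follows from $\expected|g_i| = \sqrt{2/\pi}$. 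This produces the $\sqrt{\pi/2}$ part of the target constant, written as $\sqrt{2\pi} = 2\sqrt{\pi/2}$; the remaining factor of $2$ will emerge from the margin.

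Next I would strip off the loss $\Phi$ by the contraction principle (the Gaussian analogue of Talagrand's lemma, Lemma~\ref{lemma:talagrand}), reducing $\mathfrak{G}_S(\mathcal{L}_c)$ to the Gaussian complexity of the margin class $\{(x,y)\mapsto \rho_{h_c}(x,y) : h_c \in \mathcal{H}_{p,\kappa,c}\}$. I would then decompose the margin as $\rho_{h_c}(x_i,y_i) = h_c(x_i,y_i) - \max_{y^\prime \neq y_i} h_c(x_i,y^\prime)$ and bound the supremum of the associated Gaussian process by the sum of two suprema, one for each summand (using symmetry of $\boldsymbol g$ on the second). The second summand is a maximum over class scores, so Lemma~\ref{lemma:gaussian}, together with its shift-invariant form noted after its proof, bounds its Gaussian complexity by $\frac{1}{m}\expected_{\boldsymbol g}\sup_{h}\sum_{i,j} g_{(j-1)m+i} h_j(x_i)$; the first summand merely selects the coordinate $y_i$ per example and is dominated by the same quantity through the Slepian/Sudakov--Fernique variance comparison underlying Lemma~\ref{lemma:gaussian}. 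Summing the two contributions gives the factor of $2$, and combining with the $\sqrt{\pi/2}$ of the first step yields exactly $\sqrt{2\pi}$.

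The conceptual crux---and the reason the bound for $\mathcal{H}_c$ coincides with that for $\mathcal{H}$ even though $\mathcal{H}_c \not\subseteq \mathcal{H}$---is already isolated in Lemma~\ref{lemma:gaussian}: clamping a disallowed score to the constant $-M$ forces any two modified hypotheses to agree on that coordinate, so the pairwise $L_2$ distances of the Gaussian process indexed by $\mathcal{H}_c$ never exceed those of the process indexed by $\mathcal{H}$, and the comparison principle then transfers the supremum in the right direction. Consequently the main obstacle here is not the transfer itself but the bookkeeping: verifying that both the $\max$-over-classes term and the coordinate-selection term can be routed through Lemma~\ref{lemma:gaussian} (invoking shift-invariance to absorb the offsets that appear in the margin), and tracking the constants so that the contraction leaves no spurious Lipschitz factor and the final constant is precisely $\sqrt{2\pi}$.
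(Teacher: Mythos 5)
Your proposal matches the paper's proof in all essentials: both mirror Lemma~1 of Li et al., splitting the complexity into the coordinate-selection term and the max-over-classes term (with the $\gamma\mathbf{1}_{y=y_i}$ offset absorbed by shift-invariance), routing the latter through the Gaussian comparison of Lemma~\ref{lemma:gaussian}, and assembling the constant as $2\sqrt{\pi/2}=\sqrt{2\pi}$. The only cosmetic difference is that the paper transfers the first term from $\mathcal{H}_{p,\kappa,c}$ back to $\mathcal{H}_{p,\kappa}$ on the Rademacher side via Lemma~\ref{lemma:max} before converting to a Gaussian complexity, whereas you propose handling it with the same Slepian/Sudakov--Fernique variance comparison; both are valid since the increments of the modified process are dominated coordinatewise.
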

\begin{proof}
Following the proof of lemma 1 of Li et al. \cite{NEURIPS2018_1141938b-multiclass}, we have 
\begin{align*}
    R_{S}(\mathcal{L}_c) &\leq \frac{1}{m}\expected_{\boldsymbol \sigma}
    \left[
    \sup_{h_c \in \mathcal{H}_{p, \kappa, c}} \sum_{i=1}^m \sigma_i h_c(x_i, y_i)
    \right] 
    + \frac{1}{m}\expected_{\sigma} 
 \left[
 \sup_{h_c \in \mathcal{H}_{p, \kappa, c}} \sum_{i=1}^m \sigma_i 
 \max_{y \in \mathcal{Y}}(h_c(x_i, y) - \gamma \mathbf{1}_{y=y_i})
 \right]\\
 &\leq \frac{1}{m}\expected_{\boldsymbol \sigma}
    \left[
    \sup_{h \in \mathcal{H}_{p, \kappa, }} \sum_{i=1}^m \sigma_i h(x_i, y_i)
    \right] 
    + \frac{1}{m}\expected_{\sigma} 
 \left[
 \sup_{h_c \in \mathcal{H}_{p, \kappa, c}} \sum_{i=1}^m \sigma_i 
 \max_{y \in \mathcal{Y}}(h_c(x_i, y) - \gamma \mathbf{1}_{y=y_i}).
 \right]\\
\end{align*}
Following the proof, the first term can be bounded:
\begin{align*}
     \frac{1}{m}\expected_{\boldsymbol \sigma}
    \left[
    \sup_{h \in \mathcal{H}_{p, \kappa, }} \sum_{i=1}^m \sigma_i h_(x_i, y_i)
    \right]  \leq \sqrt{\frac{\pi}{2}} \mathfrak{G}_S(\mathcal{H}_{p, \kappa})
    \leq \frac{1}{m} \sqrt{\frac{\pi}{2}}\expected_{\boldsymbol g}\sup_{h=(h_1,\ldots, h_K)\in \mathcal{H}_{p, \kappa}}
\sum_{i=1}^{m}\sum_{j=1}^{K} g_{(j-1)n+i} h_j(x_i) \,.
\end{align*}
By using lemma~\ref{lemma:gaussian}, we can bound the second term:
\begin{align*}
&   \frac{1}{m}\expected_{\sigma} 
 \left[
 \sup_{h_c \in \mathcal{H}_{p, \kappa, c}} \sum_{i=1}^m \sigma_i 
 \max_{y \in \mathcal{Y}}(h_c(x_i, y) - \gamma \mathbf{1}_{y=y_i})
 \right] \\
& \leq \frac{1}{m} \sqrt{\frac{\pi}{2}}  \expected_{\boldsymbol g}
 [g_i \max(h_{c1}(x_i) - \gamma \mathbf{1}_{y_i = 1}, \ldots, 
 h_{cK}(x_i) - \gamma \mathbf{1}_{y_i = K}
 ]\\
& \leq  \frac{1}{m} \sqrt{\frac{\pi}{2}} \expected_{\boldsymbol g} \sup_{h=(h_1,\ldots, h_K)\in \mathcal{H}_{p, \kappa}}
 \sum_{i=1}^m \sum_{j=1}^K g_{(j-1)m+i}(h_j(x_i) - \gamma \mathbf{1}_{y_i = j})\\
&  =  \frac{1}{m} \sqrt{\frac{\pi}{2}} \expected_{\boldsymbol g} \sup_{h=(h_1,\ldots, h_K)\in \mathcal{H}_{p, \kappa}}
 \sum_{i=1}^m \sum_{j=1}^K g_{(j-1)m+i}h_j(x_i). 
\end{align*}
Adding these bounds can prove the lemma. 
\end{proof}
We use the above lemma instead of lemma 1 of Li et al. \cite{NEURIPS2018_1141938b-multiclass} for proving Theorem 1 Li et al. \cite{NEURIPS2018_1141938b-multiclass}, which 
gives a proof for our theorem.

\section{Proof of Theorem 6.4}

\begin{proof}
We first prove the bound for $L_{S, \rho}^{\mathrm{add}}(h)$. Following
the proof of Theorem 1 in Cortes et al.\cite{NIPS2016_535ab766-structure}, we can prove that
$L_{\mathcal{D}}(h_c) < L_{\mathcal{D}, \rho}^{\mathrm{add}}(h_c)$ and
\begin{align*}
     L_{\mathcal{D}, \rho}^{\mathrm{add}}(h_c) \leq L_{S, \rho}^{\mathrm{add}}(h_c) + 2R_{m}(\mathcal{H}_{c1}) + B\sqrt{\frac{\log \frac{1}{\delta}}{2m}} \,,
\end{align*}
where $\mathcal{H}_{c1}$ is defined:
\begin{align*}
&\mathcal{H}_{c1}\defeq  \left\{ 
(x, y) \mapsto \max_{y^\prime \neq y}\left(\mathsf{L}(y^\prime, y) -
\frac{\left(h(x, y) - h(x, y^\prime)\right)}{\rho}\right) : h_c \in \mathcal{H}_c
\right\}.
\end{align*}
We give a bound on the empirical Rademacher complexity of $\mathcal{H}_{c1}$. Due to the sub-additivity of the supremum, the following holds:
\begin{align*}
     R_{S}(\mathcal{H}_{c1}) \leq \frac{1}{m}\expected_{\boldsymbol \sigma}\left[
    \sup_{h_c \in \mathcal{H}_c}\sum_{i=1}^{m}\sigma_i \max_{y^\prime \neq y_i}
    \left(\mathsf{L}(y^\prime, y_i) + \frac{h_c(x_i, y^\prime)}{\rho}\right)
    \right] 
+       \frac{1}{m}\expected_{\boldsymbol \sigma}\left[
    \sup_{h_c \in \mathcal{H}_c}\sum_{i=1}^{m}\sigma_i  \frac{h_c(x_i, y_i)}{\rho}
    \right] \,.
\end{align*}
We first bound the first term with the Lipschitzness of $h \mapsto \max_{y^\prime \neq y}
\left(\mathsf{L}(y^\prime, y_i) + \frac{h_c(x_i, y^\prime)}{\rho} \right)$ for any requirement $c$.
For any $h, \tilde{h} \in \mathcal{H}$,
\begin{align*}
 \left|\max_{y \neq y_i}
\left(\mathsf{L}(y, y_i) + \frac{h_c(x_i, y)}{\rho} \right)
 -  \max_{y  \neq y_i}
\left(\mathsf{L}(y, y_i) + \frac{\tilde{h}_c(x_i, y)}{\rho} \right) \right| 
& \leq \frac{1}{\rho}\max_{y \neq y_i} 
\left| h_c(x_i, y) - \tilde{h}_c(x_i, y) \right|\\
& \leq  \frac{1}{\rho}\max_{y \in \mathcal{Y}} 
\left| h_c(x_i, y) - \tilde{h}_c(x_i, y) \right|\\
& \leq  \frac{1}{\rho}\max_{y \in \mathcal{Y}} 
\left| h(x_i, y) - \tilde{h}(x_i, y) \right| \,,
\end{align*}
where we use the fact that
$|h_c(x, y) - \tilde{h}_c(x, y)| \leq |h(x, y) - \tilde{h}(x, y)|$ for any 
$h, \tilde{h}\in \mathcal{H}$ and $(x, y) \in \mathcal{X} \times \mathcal{Y}$ since
$|h_c(x, y) - \tilde{h}_c(x, y)| = 0$ or equals  $|h(x, y) - \tilde{h}(x, y)|$, depending
on requirement $c$.
Following the proof of Theorem 1 in Cortes et al. \cite{NIPS2016_535ab766-structure}, we have
\begin{align*}
&   \frac{1}{\rho}\max_{y \in \mathcal{Y}} \left| h(x_i, y) - \tilde{h}(x_i, y) \right| \\
&\leq \frac{\sqrt{|F_i|}}{\rho} \sqrt{\sum_{f \in F_i}\sum_{y \in \mathcal{Y}_f} \left| h_f(x_i, y) - \tilde{h}_f (x_i, y)\right|^2} \,.
\end{align*}
We can apply Lemma 5 of Cortes et al. \cite{NIPS2016_535ab766-structure}, which yields:
\begin{align*}
&\frac{1}{m}\expected_{\boldsymbol \sigma}\left[
    \sup_{h_c \in \mathcal{H}_c}\sum_{i=1}^{m}\sigma_i \max_{y^\prime \neq y_i}
    \left(\mathsf{L}(y^\prime, y_i) + \frac{h_c(x_i, y^\prime)}{\rho}\right)
    \right] \\
&\leq \frac{\sqrt{2}}{m} \expected\left[
\sup_{h \in \mathcal{H}}\sum_{i=1}^m\sum_{f \in F_i}\sum_{y \in \mathcal{Y}_f}
\epsilon_{i, f, y} \frac{\sqrt{|F_i|}}{\rho} h_f(x_i, y) 
\right]  \\
&= \frac{\sqrt{2}}{\rho} R_{S}^{G}(\mathcal{H}) \,.
\end{align*}
Similarly, for the second term, the following Lipschitz property holds:
\begin{align*}
\left|\frac{h_c(x_i, y_i)}{\rho} - \frac{\tilde{h}_c(x_i, y_i)}{\rho}\right| &\leq  \frac{1}{\rho}\max_{y \in \mathcal{Y}} 
\left| h_c(x_i, y) - \tilde{h}_c(x_i, y) \right|\\
& \leq  \frac{1}{\rho}\max_{y \in \mathcal{Y}} 
\left| h(x_i, y) - \tilde{h}(x_i, y) \right|\\
& \leq  \frac{1}{\rho}\max_{y \in \mathcal{Y}} 
\left| h(x_i, y) - \tilde{h}(x_i, y) \right|\\
&\leq \frac{\sqrt{|F_i|}}{\rho} \sqrt{\sum_{f \in F_i}\sum_{y \in \mathcal{Y}_f} \left| h_f(x_i, y) - \tilde{h}_f (x_i, y)\right|^2} \,.
\end{align*}
Therefore, we can also obtain bound
\begin{align*}
  \frac{1}{m}\expected_{\boldsymbol \sigma}\left[
    \sup_{h_c \in \mathcal{H}_c}\sum_{i=1}^{m}\sigma_i  \frac{h_c(x_i, y_i)}{\rho}
    \right] \leq  \frac{\sqrt{2}}{\rho} R_{S}^{G}(\mathcal{H}) \, . 
\end{align*}
Taking the expectation over $S$ of the two inequalities shows that $R_{m}(\mathcal{H}_{c1}) \leq \frac{2\sqrt{2}}{\rho}R_{m}^{G}(\mathcal{H})$,
which completes the proof of the first statement. 

For the second statement, we follow a proof of Cortes et al. \cite{NIPS2016_535ab766-structure} to obtain:
\[
L_{\mathcal{D}, \rho}^{\mathrm{mult}}(h_c) \leq L_{S, \rho}^{\mathrm{mult}}(h_c) + 2R_{m}(\tilde{\mathcal{H}}_{c1}) + B\sqrt{\frac{\log \frac{1}{\delta}}{2m}} \,,
\]
where
\[
\tilde{\mathcal{H}}_{c1} \defeq \left\{ 
(x, y) \mapsto \max_{y^\prime \neq y} \mathsf{L}(y^\prime, y) \left( 
1 - \frac{h_c(x, y) - h_c(x, y^\prime)}{\rho} 
\right)
: h_c \in \mathcal{H}_c
\right\} \,.
\]
We can see the following inequality holds:
\begin{align*}
\left|
    \max_{y^\prime \neq y_i} \mathsf{L}(y^\prime, y_i) \left( 
1 - \frac{h_c(x_i, y_i) - h_c(x_i, y^\prime)}{\rho} 
\right) - \right.& \left.
    \max_{y^\prime \neq y_i} \mathsf{L}(y^\prime, y_i) \left( 
1 - \frac{\tilde{h}_c(x_i, y_i) - \tilde{h}_c(x_i, y^\prime)}{\rho} 
\right)  \right|\\
&\leq  \frac{2B}{\rho}\max_{y \in \mathcal{Y}} 
\left|
h_c(x_i, y) - \tilde{h}_c(x_i, y)
\right|\\
& \leq  
\frac{2B}{\rho}\max_{y \in \mathcal{Y}} 
\left|
h(x_i, y) - \tilde{h}(x_i, y)
\right| \,,
\end{align*}
where we use the fact  $|h_c(x, y) - \tilde{h}_c(x, y)| \leq |h(x, y) - \tilde{h}(x, y)|$ for any 
$h, \tilde{h}\in \mathcal{H}$ and $(x, y) \in \mathcal{X} \times \mathcal{Y}$. The reminder of the proof is identical as in the
previous argument.
\end{proof}

\section{Summary of Notations}
Table \ref{tab:my_label} shows the notations used in the paper.
\begin{table}[h]
    \centering

    \begin{tabular}{ll}
    \toprule
        Symbol & Meaning \\\midrule
         $\mathcal{X}$& domain of inputs \\
        $\mathcal{Y}$ & domain of labels \\
        $Z $& domain of examples \\
        $h$& a hypothesis \\
        $\mathcal{H}$ & a hypothesis class\\
                $c: \mathcal{X} \times \mathcal{Y} \to \{0, 1\} $ & requirement function \\
        $h_c$ & hypothesis $h$ modified to satisfy requirement $c$ \\
        $\mathcal{H}_c$ & set of modified hypotheses defined  as $\mathcal{H}_c := \{h_c : h \in \mathcal{H}\}$\\
        $\ell : \mathcal{H} \times Z \to \mathbb{R}_{+}$ & loss function \\
        $\ell_{\operatorname{0-1}}$ & the $\operatorname{0-1}$ loss function \\
        $\ell_{\rho}$ & a  margin loss function \\
        $S = z_1, \ldots, z_m$ & a sequence of $m$ examples \\
        $\mathcal{D}$ & a distribution over $\mathcal{X} \times \mathcal{Y}$ \\
        $L_\mathcal{D}(h)$ & generalization error of $h$ \\
        $L_\mathcal{S}(h)$ & empirical error of $h$  over $S$\\
        $L_\mathcal{S, \rho}(h)$ & empirical margin error of $h$ over $S$\\
        $R_{S}(\mathcal{G})$ &the empirical Rademacher complexity of $\mathcal{G}$ with respect to $S$ \\
                $R_{m}(\mathcal{G})$ &the  Rademacher complexity of $\mathcal{G}$ \\
        $R_{m}(\mathcal{G}; r)$ &the empirical local Rademacher complexity of $\mathcal{G}$ \\
        $R^{G}_{m}(\mathcal{H})$ &the empirical factor graph Rademacher complexity of hypothesis class $\mathcal{H}$ \\
        $\rho_h(x, y)$ & a margin function \\
        
            \bottomrule
    \end{tabular}
    \caption{Summary of notations}
    \label{tab:my_label}
\end{table}

\end{document}